\newtheorem{claim}{Claim}
\crefname{section}{Sec.}{Secs.}
\Crefname{section}{Section}{Sections}
\Crefname{table}{Table}{Tables}
\crefname{table}{Tab.}{Tabs.}
\newcommand{\uvomega}{\hat{\boldsymbol{\omega}}}
\newcommand{\uvn}{\hat{\mathbf{n}}}
\newcommand{\vN}{\mathbf{N}}
\newcommand{\uvN}{\hat{\mathbf{N}}}
\newcommand{\uvc}{\hat{\mathbf{c}}}
\newcommand{\rgb}{\mathbf{c}}
\newcommand{\uvd}{\hat{\mathbf{d}}}
\newcommand{\vo}{\mathbf{o}}
\newcommand{\vx}{\mathbf{x}}
\newcommand{\vb}{\mathbf{b}}
\newcommand{\refdir}{\hat{\boldsymbol{\omega}}_r}  
\newcommand{\viewdir}{\hat{\mathbf{d}}}  
\newcommand{\outdir}{\hat{\boldsymbol{\omega}}_o}  
\newcommand{\lightdir}{\hat{\boldsymbol{\omega}}_i}  
\newcommand{\density}{\tau}  
\newcommand{\tint}{\mathbf{s}}  
\definecolor{yellow}{rgb}{1,1, 0.7}
\definecolor{lightyellow}{rgb}{1,1, 0.8}
\definecolor{orange}{rgb}{1, 0.85, 0.7}
\definecolor{tablered}{rgb}{1, 0.7, 0.7}
\begin{document}

\title{Ref-NeRF: Structured View-Dependent Appearance for Neural Radiance Fields}

\author{
Dor Verbin$^{1, 2}$
\quad
Peter Hedman$^2$
\quad
Ben Mildenhall$^2$
\\
Todd Zickler$^1$
\quad
Jonathan T. Barron$^2$
\quad
Pratul P. Srinivasan$^2$
\\
\vspace{2mm}
{$^1$Harvard University \quad $^2$Google Research}
}

\maketitle

\begin{abstract}

Neural Radiance Fields (NeRF) is a popular view synthesis technique that represents a scene as a continuous volumetric function, parameterized by multilayer perceptrons that provide the volume density and view-dependent emitted radiance at each location. While NeRF-based techniques excel at representing fine geometric structures with smoothly varying view-dependent appearance, they often fail to accurately capture and reproduce the appearance of glossy surfaces. We address this limitation by introducing Ref-NeRF, which replaces NeRF's parameterization of view-dependent outgoing radiance with a representation of reflected radiance and structures this function using a collection of spatially-varying scene properties. We show that together with a regularizer on normal vectors, our model significantly improves the realism and accuracy of specular reflections. Furthermore, we show that our model's internal representation of outgoing radiance is interpretable and useful for scene editing.

\end{abstract}

\begin{figure}[!ht]
    \centering
    \begin{tabular}[width=\linewidth]{ccc}
    \quad \scriptsize Ground Truth & \qquad\qquad \scriptsize Ours  & \qquad\qquad \scriptsize Mip-NeRF~\cite{barron2021mipnerf}  \end{tabular}
    \includegraphics[width=\linewidth]{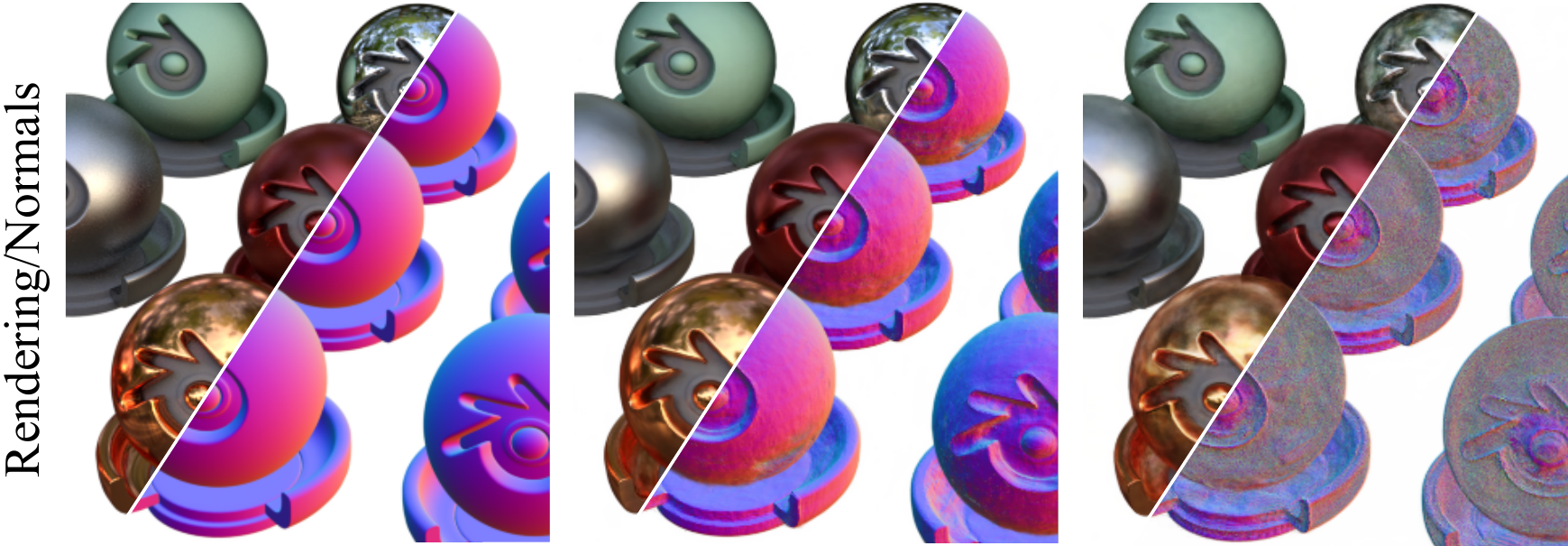} \\
    \begin{tabular}[width=\linewidth]{ccc}
    \qquad \raisebox{5px}{\scriptsize PSNR$\uparrow$/MAE$\downarrow$:} \qquad & \quad\, \raisebox{5px}{\scriptsize $\mathbf{35.6}$\textbf{dB}/$\mathbf{11.5^\circ}$} & \qquad \raisebox{5px}{\scriptsize $30.3$dB/$59.5^\circ$} \qquad\qquad\quad
    \end{tabular}
    \includegraphics[width=\linewidth]{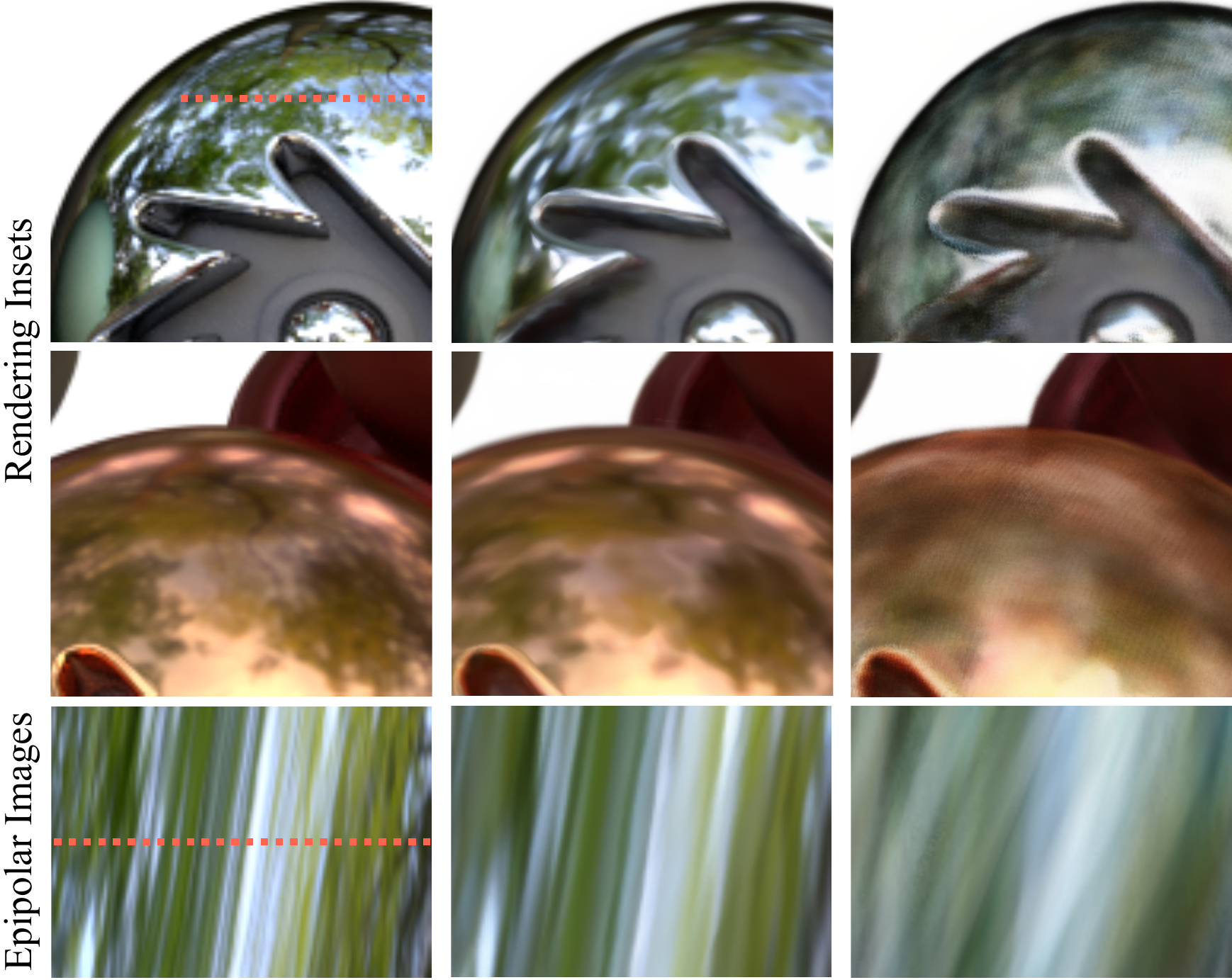}
    \caption{
    Ref-NeRF significantly improves normal vectors (top row) and visual realism (remaining rows) compared to mip-NeRF, the previous top-performing neural view synthesis model. Ref-NeRF's improvements are apparent in rendered frames (Rows 2 \& 3), and even more in rendered videos (bottom row epipolar plane images and supplementary video), where its glossy highlights shift realistically across views instead of blurring and fading like mip-NeRF's.
    Image PSNR (higher is better) and surface normal mean angular error (lower is better) shown as insets.
    }
    \label{fig:teaser}
\end{figure}

\section{Introduction}
\label{sec:intro}

Neural Radiance Fields (NeRF)~\cite{mildenhall2020nerf} renders compelling photorealistic images of 3D scenes from novel viewpoints using a neural volumetric scene representation. Given any input 3D coordinate in the scene, a ``spatial'' multilayer perceptron (MLP) outputs the corresponding volume density at that point, and a ``directional'' MLP outputs the outgoing radiance at that point along any input viewing direction. Although NeRF's renderings of view-dependent appearance may appear reasonable at first glance, a close inspection of specular highlights reveals spurious glossy artifacts that fade in and out between rendered views (Figure~\ref{fig:teaser}), rather than smoothly moving across surfaces in a physically-plausible manner.

These artifacts are caused by two fundamental issues with NeRF (and top-performing extensions such as mip-NeRF~\cite{barron2021mipnerf}). First, NeRF's parameterization of the outgoing radiance at each point as a function of the viewing direction is poorly-suited for interpolation. Figure~\ref{fig:reparam} illustrates that, even for a simple toy setup, the scene's true radiance function varies quickly with view direction, especially around specular highlights. As a consequence, NeRF is only able to accurately render the appearance of scene points from the specific viewing directions observed in the training images, and its interpolation of glossy appearance from novel viewpoints is poor. Second, NeRF tends to ``fake'' specular reflections using isotropic emitters inside the object instead of view-dependent radiance emitted by points at the surface, resulting in objects with semitransparent or ``foggy'' shells.

Our key insight is that structuring NeRF's representation of view-dependent appearance can make the underlying function simpler and easier to interpolate. We present a model, which we call Ref-NeRF, that reparameterizes NeRF's directional MLP by providing the reflection of the viewing vector about the local normal vector as input instead of the viewing vector itself. Figure~\ref{fig:reparam} (left column) illustrates that, for a toy scene comprised of a glossy object under distant illumination, this \emph{reflected radiance} function is constant across the scene (ignoring lighting occlusions and interreflections) because it is unaffected by changes in surface orientation. Consequently, since the directional MLP acts as an interpolation kernel, our model is better able to ``share'' observations of appearance between nearby points to render more realistic view-dependent effects in interpolated views. We additionally introduce an \emph{Integrated Directional Encoding} technique, and we structure outgoing radiance into explicit diffuse and specular components to allow the reflected radiance function to remain smooth despite variation in material and texture over the scene.

While these improvements crucially enable Ref-NeRF to accurately interpolate view-dependent appearance, they rely on the ability to reflect viewing vectors about normal vectors estimated from NeRF's volumetric geometry. This presents a problem, as NeRF's geometry is foggy and not tightly concentrated at surfaces, and its normal vectors are too noisy to be useful for computing reflection directions (as shown in the right column of Figure~\ref{fig:teaser}). We ameliorate this issue with a novel regularizer for volume density that significantly improves the quality of NeRF's normal vectors and encourages volume density to concentrate around surfaces, enabling our model to compute accurate reflection vectors and render realistic specular reflections, shown in Figure~\ref{fig:teaser}.

\begin{figure}
    \centering
    \includegraphics[width=\linewidth]{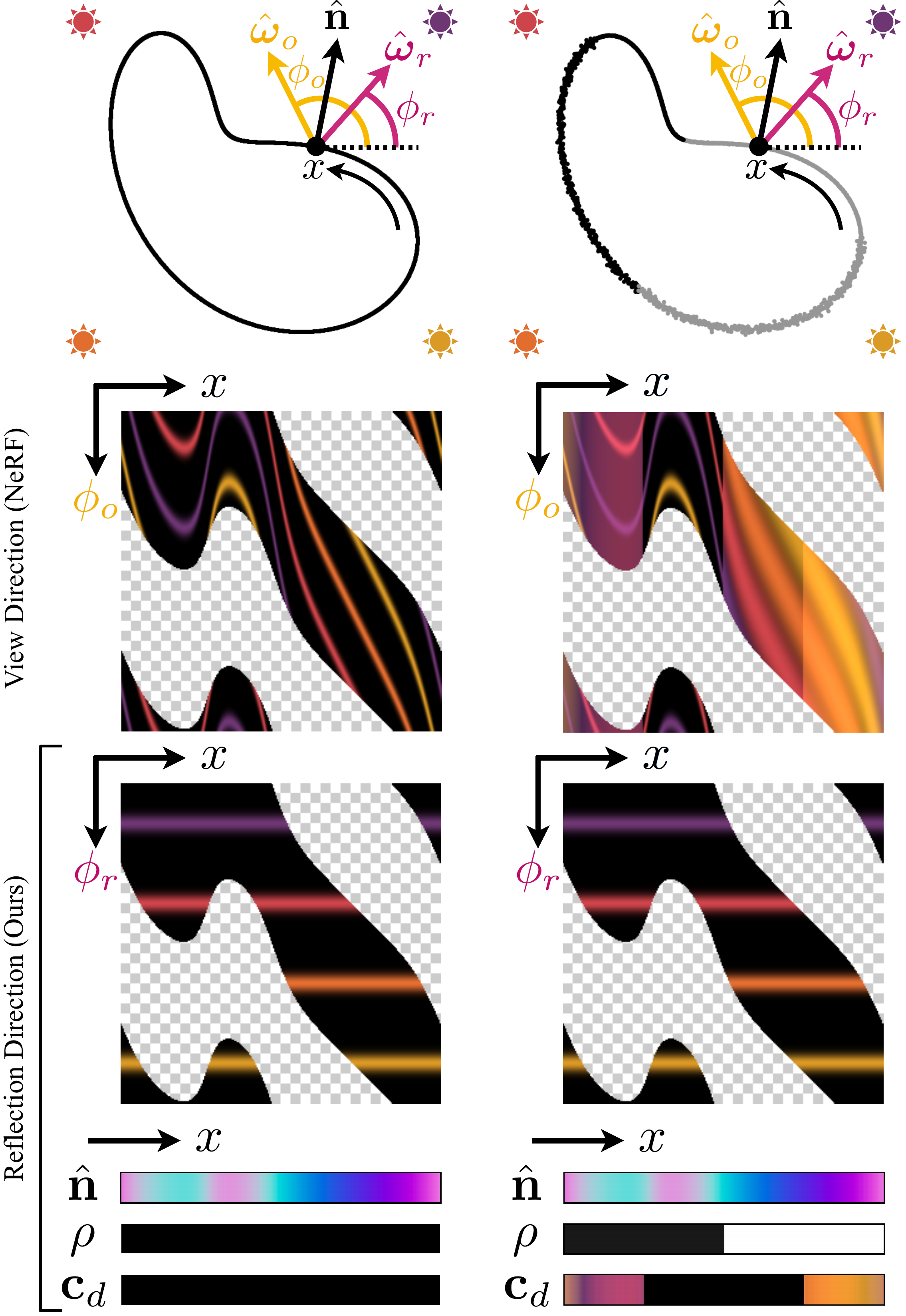}
    \caption{Visualizations of outgoing radiance in NeRF and Ref-NeRF, using 2D position-angle slices of radiance along an $x$-parameterized surface curve on a glossy object under colored lights.
    Because NeRF (middle row)  uses view angle $\phi_o$ as input, when presented with glossy reflectances (left) or spatially-varying materials (right) it must interpolate between highly complicated functions like the irregularly curved colored lines shown here.
    In contrast, Ref-NeRF (bottom row) parameterizes radiance using a normal vector $\uvn$ and a reflection angle $\phi_r$, and adds diffuse color $\rgb_{d}$ and roughness $\rho$ to its spatial MLP, which collectively makes radiance functions simple to model even for shiny or spatially-varying materials.
    The gray checkerboard indicates directions below the surface at position $x$. 
    } 
    \label{fig:reparam}
\end{figure}

To summarize, we make the following contributions:

\begin{compactenum}
    \item A reparameterization of NeRF's outgoing radiance, based on the reflection of the viewing vector about the local normal vector (Section~\ref{sec:refl}).
    \item An Integrated Directional Encoding (Section~\ref{sec:ide}) that, when coupled with a separation of diffuse and specular colors (Section~\ref{sec:diffspec}), enables the reflected radiance function to be smoothly interpolated across scenes with varying materials and textures.
    \item A regularization that concentrates volume density around surfaces and improves the orientation of NeRF's normal vectors (Section~\ref{sec:normals}).
\end{compactenum}

We apply these changes on top of mip-NeRF~\cite{barron2021mipnerf}, currently the top-performing neural representation for view synthesis. Our experiments demonstrate that Ref-NeRF produces state-of-the-art renderings of novel viewpoints, and substantially improves upon the quality of previous top-performing view synthesis methods for highly specular or glossy objects. Furthermore, our structuring of outgoing radiance produces interpretable components (normal vectors, material roughness, diffuse texture, and specular tint) that enable convincing scene editing capabilities.

\section{Related Work}
\label{sec:related}

We review NeRF and related methods for photorealistic view synthesis, as well as techniques from computer graphics for capturing and rendering specular appearance.

\paragraph{3D scene representations for view synthesis}

View synthesis, the task of using observed images of a scene to render images from novel unobserved camera viewpoints, is a longstanding research problem within the fields of computer vision and graphics. 
In situations where it is possible to densely capture images of the scene, simple light field interpolation techniques~\cite{gortler1996lumigraph,levoy1996light} can render novel views with high fidelity. However, exhaustive sampling of the light field is impractical in most scenarios, so methods for view synthesis from sparsely-captured images reconstruct 3D scene geometry in order to reproject observed images into novel viewpoints~\cite{chen1993ibr}.
For scenes with glossy surfaces, some methods explicitly build virtual geometry to explain the motion of reflections~\cite{sinha2012,kopf2013,rodriguez2020semantic}.
Early approaches used triangle meshes as the geometry representation, and rendered novel views by reprojecting and blending multiple captured images with either heuristic~\cite{buehler2001unstructured,debevec1996modeling,wood2000surface} or learned~\cite{hedman2018deep,riegler2020free} blending algorithms. Recent works have used volumetric representations such as voxel grids~\cite{lombardi2019neuralvolumes} or multiplane images~\cite{flynn2019deepview,mildenhall2019llff,srinivasan2019pushing,wizadwongsa2021nex,zhou2018stereomag}, which are better suited for gradient-based optimization than meshes. While these discrete volumetric representations can be effective for view synthesis, their cubic scaling limits their ability to represent large or high resolution scenes.

The recent paradigm of \emph{coordinate-based neural representations} replaces traditional discrete representations with an MLP that maps from any continuous input 3D coordinate to the geometry and appearance of the scene at that location. NeRF~\cite{mildenhall2020nerf} is an effective coordinate-based neural representation for photorealistic view synthesis that represents a scene as a field of particles that block and emit view-dependent light. NeRF has inspired many subsequent works, which extend its neural volumetric scene representation to application domains including dynamic and deformable scenes~\cite{park2021nerfies}, avatar animation~\cite{gafni2021dynamic,peng2021animate}, and even phototourism~\cite{martinbrualla2020nerfw}. Our work focuses on improving a core component of NeRF: the representation of view-dependent appearance. We believe that the improvements presented here can be used to improve rendering quality in many of the applications of NeRF described above. 

A key component of our approach considers the reflection of camera rays off NeRF's geometry. This idea is shared by recent works that extend NeRF to enable relighting by decomposing appearance into scene lighting and materials~\cite{bi2020nrf,boss2021nerd,boss2021neuralpil,srinivasan2021nerv,zhang2021physg,zhang2021nerfactor}. Crucially, our model structures the scene into components that are \emph{not} required to have precise physical meanings, and is thus able to avoid the strong simplifying assumptions (such as known lighting~\cite{bi2020nrf,srinivasan2021nerv}, no self-occlusions~\cite{boss2021nerd,boss2021neuralpil,zhang2021physg}, single-material scenes~\cite{zhang2021physg}) that these works need to make to recover explicit parametric representations of lighting and material. Our work also focuses on improving the smoothness and quality of normal vectors extracted from NeRF's geometry. This goal is shared by recent works that combine NeRF's neural volumetric representation with neural implicit surface representations~\cite{oechsle2021unisurf,wang2021neus,yariv2021volume}. However, these methods primarily focus on the quality of isosurfaces extracted from their representation as opposed to the quality of rendered novel views, and as such their view synthesis performance is significantly worse than top-performing NeRF-like models.

\paragraph{Efficient rendering of glossy appearance}

Our work takes inspiration from seminal approaches in computer graphics for representing and rendering view-dependent specular and reflective appearance, particularly techniques based on precomputation~\cite{ramamoorthi2009prt}. The reflected radiance function encoded in our directional MLP is similar to prefiltered environment maps~\cite{kautz2000approximation,ramamoorthi2002frequency}, which were introduced for real-time rendering of specular appearance. Prefiltered environment maps leverage the insight that the outgoing light from a surface can be seen as a spherical convolution of the incoming light and the (radially-symmetric) bidirectional reflectance distribution function (BRDF) that describes the material properties of the surface~\cite{ramamoorthi2001signal}. After storing this convolution result, rays intersecting the object can be rendered efficiently by simply indexing into the prefiltered environment maps with the reflection direction of the viewing vector about the normal vector. 

Instead of rendering predefined 3D assets, our work leverages these computer graphics insights to solve a computer vision problem where we are recovering a renderable model of the scene from images. Furthermore, our directional MLP's representation of reflected radiance improves upon the prefiltered environment map representations used in computer graphics in a critical way: our directional MLP can represent spatial variation in reflected radiance due to spatial variation in both lighting and scene properties such as material roughness and texture, while the techniques described previously require computing and storing discrete prefiltered radiance maps for each possible material.

Our work is also inspired by a long line of works in computer graphics that reparameterize directional functions such as BRDFs~\cite{ramamoorthi2002frequency,rusinkiewiczbrdf} and outgoing radiance~\cite{wood2000surface} for improved interpolation and compression. 

\begin{figure*}[t]
    \centering
    \includegraphics[width=\linewidth]{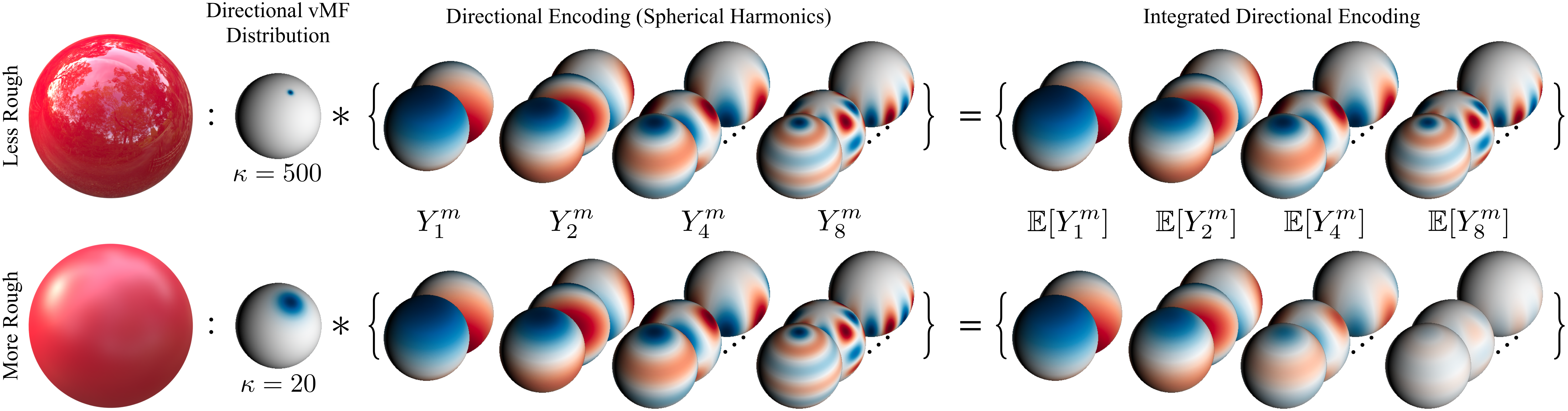}
    \caption{
    We enable the directional MLP to represent reflected radiance functions for any continuously-valued roughness using our integrated directional encoding. Each component of the encoding is a spherical harmonic function convolved with a vMF distribution with concentration parameter $\kappa$, output by our spatial MLP (equivalent to the expectation of the spherical harmonic under the vMF). Less rough locations receive higher-frequency encodings (top), while more rough regions receive encodings with attenuated high frequencies. Our IDE allows lighting information to be shared between locations with different roughnesses, and lets reflectance be edited. 
    }
    \label{fig:ide}
\end{figure*}

\subsection{NeRF Preliminaries}
\label{sec:review}

NeRF~\cite{mildenhall2020nerf} represents a scene as a volumetric field of particles that emit and absorb light. Given any input 3D position $\vx$, NeRF uses a \emph{spatial} MLP to output the density $\density(\vx)$ of volumetric particles as well as a ``bottleneck'' vector $\vb(\vx)$ which, along with the view direction $\viewdir$, is provided to a second \emph{directional} MLP that outputs the color $\rgb(\vx, \viewdir)$ of light emitted by a particle at that 3D position at direction $\viewdir$ (see Figure~\ref{fig:arch} for a visualization). Note that Mildenhall \etal~\cite{mildenhall2020nerf} use a single-layer directional MLP in their work, and that prior work often describes the combination of NeRF's spatial and directional MLPs as a single MLP.

The two MLPs are queried at points $\vx_i = \vo + t_i\viewdir$ along a ray originating at $\vo$ with direction $\viewdir$, and return densities $\{\density_i\}$ and colors $\{\rgb_i\}$. These densities and colors are alpha composited using numerical quadrature~\cite{max1995optical} to obtain the color of the pixel corresponding to the ray:
\begin{align} \label{eq:nerfcolor}
    \mathbf{C}(\vo, \viewdir) &= \sum_{i} w_i \rgb_i \, ,  \\
    \text{where}\quad w_i &= e^{-\sum_{j < i} \density_j (t_{j+1} - t_j)} \left(1 - e^{-\density_i(t_{i+1} - t_i)} \right) \, .\nonumber
\end{align}
The MLP parameters are optimized to minimize the L2 difference between each pixel's predicted color $\mathbf{C}(\vo, \viewdir)$ and its true color $\mathbf{C}_{\mathrm{gt}}(\vo, \viewdir)$ taken from the input image:
\begin{equation} \label{eq:nerfloss}
    \mathcal{L} = \sum_{\vo, \viewdir} \|\mathbf{C}(\vo, \viewdir) - \mathbf{C}_{\mathrm{gt}}(\vo, \viewdir)\|^2 \, .
\end{equation}
In practice, NeRF uses two sets of MLPs, one coarse and one fine, in a hierarchical sampling fashion, where both are trained to minimize the loss in Equation~\ref{eq:nerfloss}. 

Prior NeRF-based models define a normal vector field in the scene by either using the spatial MLP to predict unit vectors~\cite{bi2020nrf,zhang2021nerfactor} at any 3D location, or by using the gradient of the volume density with respect to 3D position~\cite{boss2021nerd,srinivasan2021nerv}:

\begin{equation} \label{eq:normals}
    \uvn(\vx) = -\frac{\nabla \density(\vx)}{\|\nabla \density(\vx)\|} \, .
\end{equation}

\section{Structured View-Dependent Appearance}

In this section, we describe how Ref-NeRF structures the outgoing radiance at each point into (prefiltered) incoming radiance, diffuse color, material roughness, and specular tint, which are better-suited for smooth interpolation across the scene than the function of outgoing radiance parameterized by view direction. By explicitly using these components in our directional MLP (Figure~\ref{fig:arch}), Ref-NeRF can accurately reproduce the appearance of specular highlights and reflections. In addition, our model's decomposition of outgoing radiance enables scene editing.

\subsection{Reflection Direction Parameterization}
\label{sec:refl}

While NeRF directly uses view direction, we instead reparameterize outgoing radiance as a function of the reflection of the view direction about the local normal vector:
\begin{equation} \label{eq:refdir}
    \refdir = 2(\outdir\cdot\uvn)\uvn - \outdir,
\end{equation}
where $\outdir = -\viewdir$ is a unit vector pointing from a point in space to the camera, and $\uvn$ is the normal vector at that point. As demonstrated in Figure~\ref{fig:reparam}, this reparameterization makes specular appearance better-suited for interpolation.

For BRDFs that are rotationally-symmetric about the reflected view direction, \ie ones that satisfy $f(\lightdir,\outdir)=p(\refdir\cdot\lightdir)$ for some lobe function $p$ (which includes BRDFs such as Phong~\cite{phong1975illumination}), and neglecting phenomena such as interreflections and self-occlusions, view-dependent radiance is a function of the reflection direction $\refdir$ only:
\begin{equation} \label{eq:phongrendering}
    L_{\mathrm{out}}(\outdir) \propto \int L_{\mathrm{in}}(\lightdir)p(\refdir\cdot\lightdir) d\lightdir = F(\refdir).
\end{equation}
Thus, by querying the directional MLP with the reflection direction, we are effectively training it to output this integral as a function of $\refdir$. Because more general BRDFs may vary with the angle between the view direction and normal vector due to phenomena such as Fresnel effects~\cite{kautz2000approximation}, we also input $\uvn \cdot \outdir$ to the directional MLP to allow the model to adjust the shape of the underlying BRDF.

\subsection{Integrated Directional Encoding}
\label{sec:ide}

In realistic scenes with spatially-varying materials, radiance cannot be represented as a function of reflection direction alone. The appearance of rougher materials changes slowly with reflection direction, while the appearance of smoother or shinier materials changes rapidly. We introduce a technique, which we call an \emph{Integrated Directional Encoding (IDE)}, that enables the directional MLP to efficiently represent the function of outgoing radiance for materials with any continuously-valued roughness. Our IDE is inspired by the integrated positional encoding introduced by mip-NeRF~\cite{barron2021mipnerf} which enables the spatial MLP to represent prefiltered volume density for anti-aliasing.

First, instead of encoding directions with a set of sinusoids, as done in NeRF, we encode directions with a set of spherical harmonics $\{Y_\ell^m\}$. 
This encoding benefits from being stationary on the sphere,
a property which is crucial to the effectiveness of positional encoding in Euclidean space~\cite{mildenhall2020nerf,tancik2020fourfeat} (more details in our supplement).

Next, we enable the directional MLP to reason about materials with different roughnesses by encoding a distribution of reflection vectors instead of a single vector. We model this distribution defined on the unit sphere with a von Mises-Fisher (vMF) distribution (also known as a normalized spherical Gaussian), centered at reflection vector $\refdir$, and with a concentration parameter $\kappa$ defined as inverse roughness $\kappa=\nicefrac{1}{\rho}$. The roughness $\rho$ is output by the spatial MLP (using a softplus activation) and determines the roughness of the surface: a larger $\rho$ value corresponds to a rougher surface with a wider vMF distribution. Our IDE encodes the distribution of reflection directions using the expected value of a set of spherical harmonics under this vMF distribution:
\begin{align}
    \operatorname{IDE}(\refdir, \kappa) &= \left\{\mathbb{E}_{\uvomega\sim\operatorname{vMF}(\refdir, \kappa)}[Y_\ell^m(\uvomega)] \colon (\ell, m) \in \mathcal{M}_L\right\}, \nonumber\\
    \text{with}\quad \mathcal{M}_L &= \{(\ell, m) \colon \ell=1,...,2^L, m=0,...,\ell\}.
\end{align}

In our supplement, we show that the expected value of any spherical harmonic under a vMF distribution has the following simple closed-form expression:
\begin{equation}
    \mathbb{E}_{\uvomega\sim\operatorname{vMF}(\refdir, \kappa)}[Y_\ell^m(\uvomega)] = A_\ell(\kappa)Y_\ell^m(\refdir),
\end{equation}
and that the $\ell$th attenuation function $A_\ell(\kappa)$ can be well-approximated using a simple exponential function:
\begin{equation} \label{eq:attenuation2}
    A_\ell(\kappa) \approx \exp\left(-\frac{\ell(\ell+1)}{2\kappa}\right).
\end{equation}

Figure~\ref{fig:ide} illustrates that our integrated directional encoding has an intuitive behavior; increasing the roughness of a material by lowering $\kappa$ corresponds to attenuating the encoding's spherical harmonics with high orders $\ell$, resulting in a wider interpolation kernel that limits the high frequencies in the represented view-dependent color.

\begin{figure}[t]
    \centering
    \includegraphics[width=\linewidth]{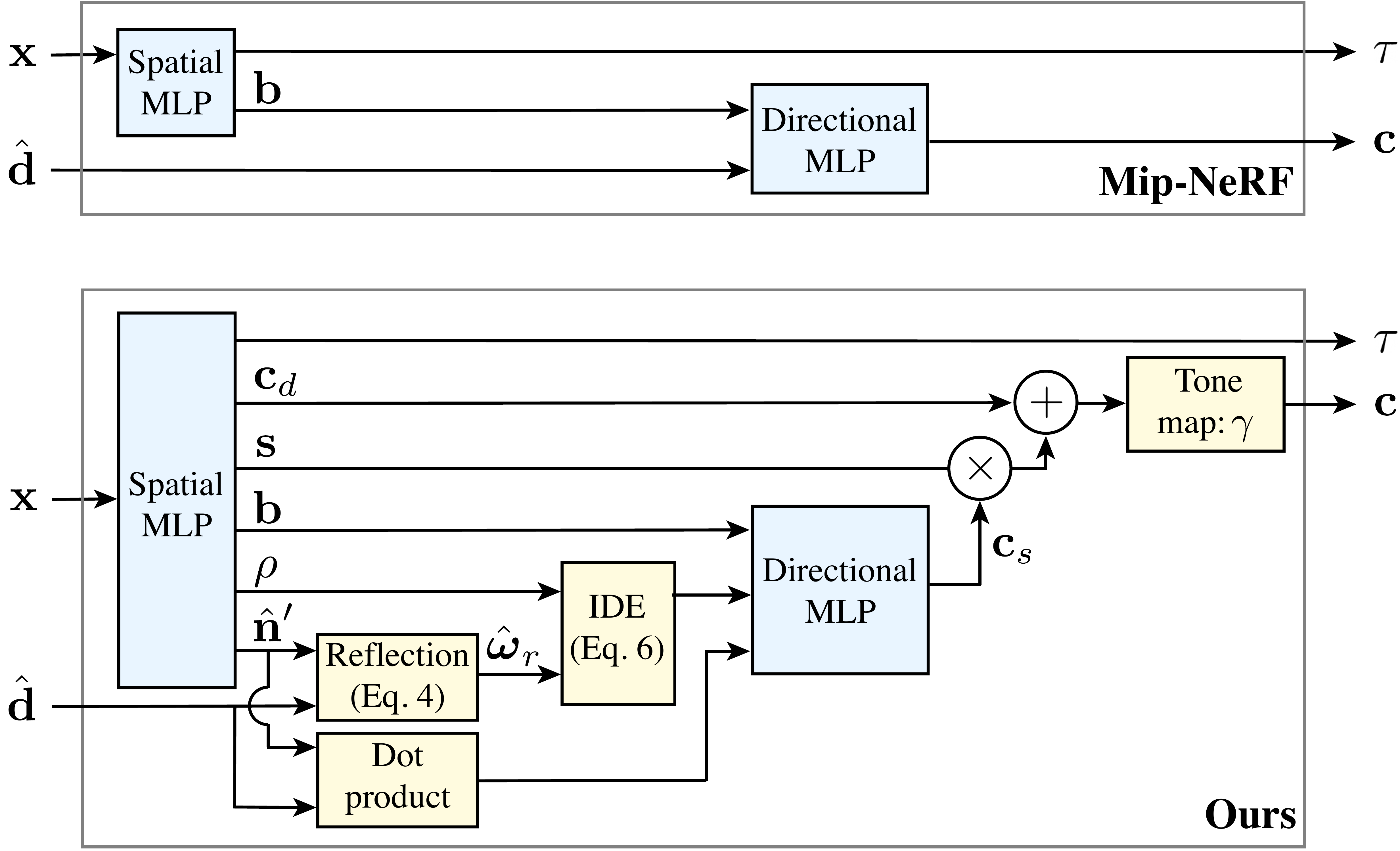}
    \caption{A visualization of mip-NeRF's and our architectures.
    }
    \label{fig:arch}
\end{figure}

\subsection{Diffuse and Specular Colors}
\label{sec:diffspec}

We further simplify the function of outgoing radiance by separating the diffuse and specular components, using the fact that diffuse color is (by definition) a function of only position. We modify the spatial MLP to output a diffuse color $\rgb_{d}$ and a specular tint $\tint$, and we combine this with the specular color $\rgb_{s}$ provided by the directional MLP to obtain a single color value:
\begin{equation}
    \rgb = \gamma(\rgb_{d} + \tint\odot\rgb_{s}),
\end{equation}
where $\odot$ denotes elementwise multiplication, and $\gamma$ is a fixed tone mapping function that converts linear color to sRGB~\cite{anderson1996proposal} and clips the output color to lie in $[0, 1]$.

\subsection{Additional Degrees of Freedom}

Effects such as interreflections and self-occlusion of lighting cause illumination to vary spatially over a scene. We therefore additionally pass a bottleneck vector $\vb$, output by the spatial MLP, into the directional MLP so the reflected radiance can change with 3D position.

\newcommand{\phongid}{199}
\newcommand{\cropamt}{75px}
\begin{figure}[t]
    \centering
    \begin{tabular}{@{}c@{\,}@{}c@{\,}@{\,}c@{\,}@{\,}c@{}}      & \scriptsize Ground truth  & \scriptsize Ours & \scriptsize Mip-NeRF~\cite{barron2021mipnerf}\\
        \rotatebox{90}{\quad \scriptsize Rendering/Normals} &             
        \includegraphics[trim={\cropamt{} \cropamt{} \cropamt{} \cropamt{}},clip,width=0.31\linewidth]{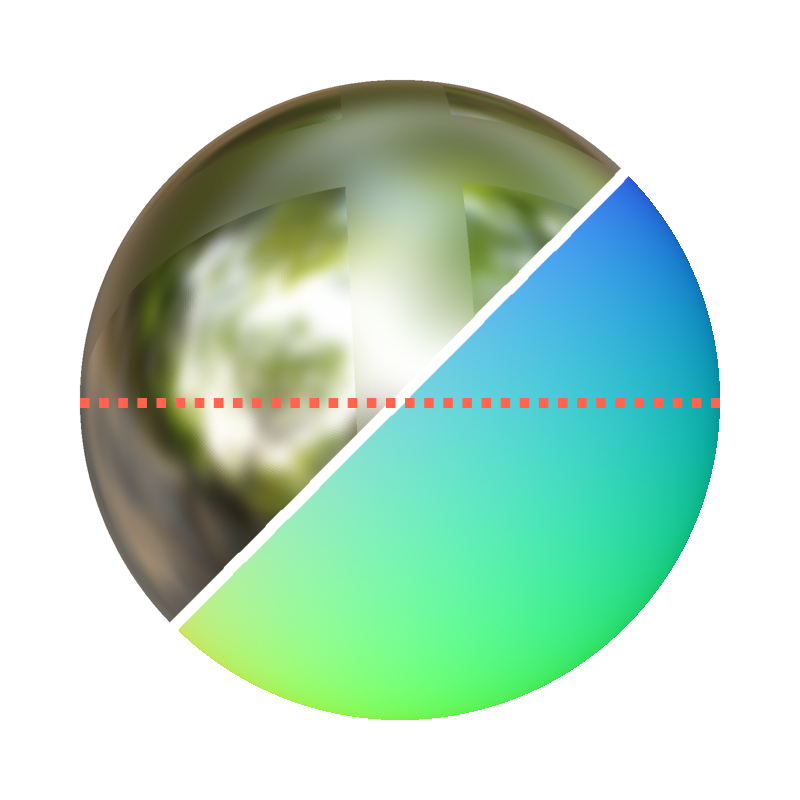} & 
        \includegraphics[trim={\cropamt{} \cropamt{} \cropamt{} \cropamt{}},clip,width=0.31\linewidth]{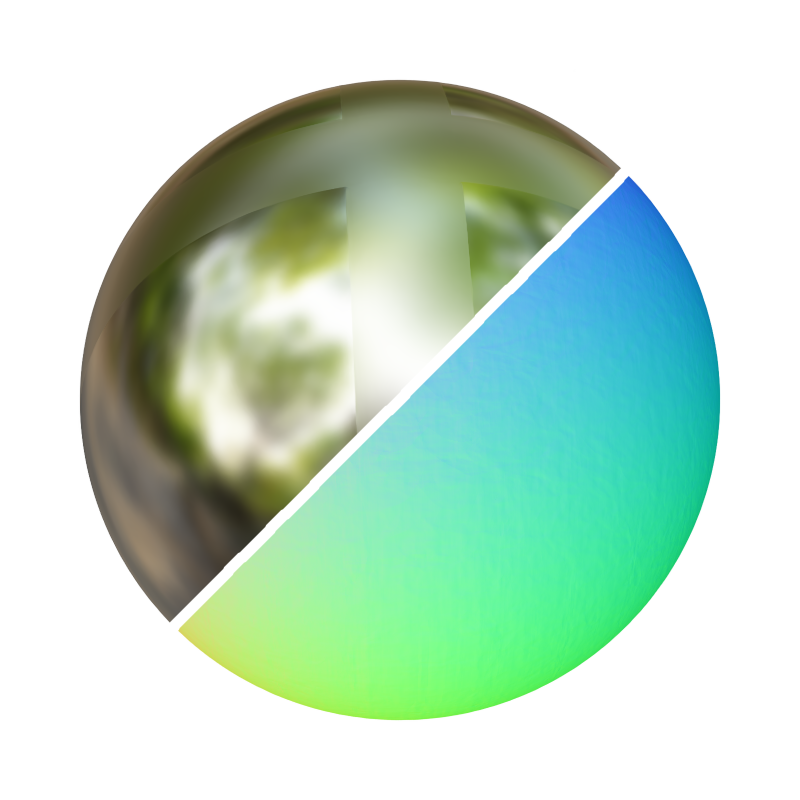} & 
        \includegraphics[trim={\cropamt{} \cropamt{} \cropamt{} \cropamt{}},clip,width=0.31\linewidth]{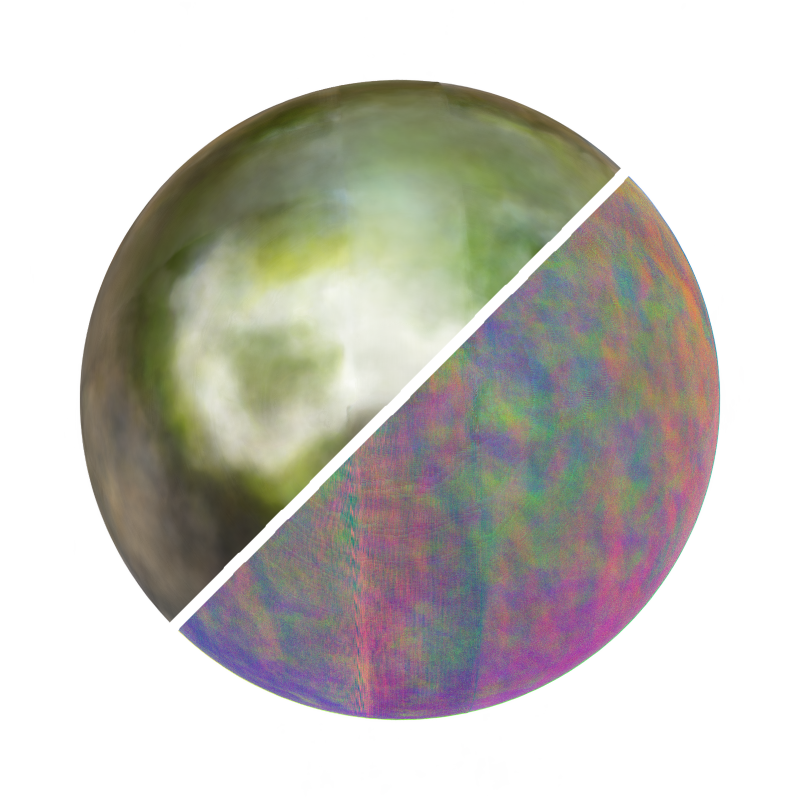} \\
        & \raisebox{5px}{\scriptsize PSNR$\uparrow$/MAE$\downarrow$:} &\raisebox{5px}{\scriptsize $\mathbf{47.7}$\textbf{dB}/$\mathbf{1.6^\circ}$} & \raisebox{5px}{\scriptsize $23.2$dB/$96.6^\circ$} \\
        \rotatebox{90}{\scriptsize Epipolar Plane Image} &
        \includegraphics[width=0.28\linewidth]{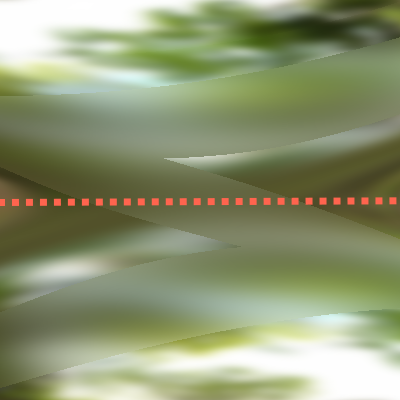} & 
        \includegraphics[width=0.28\linewidth]{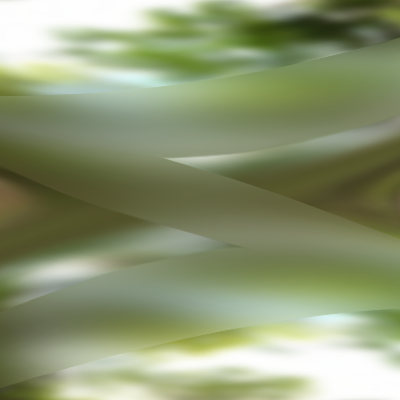} & 
        \includegraphics[width=0.28\linewidth]{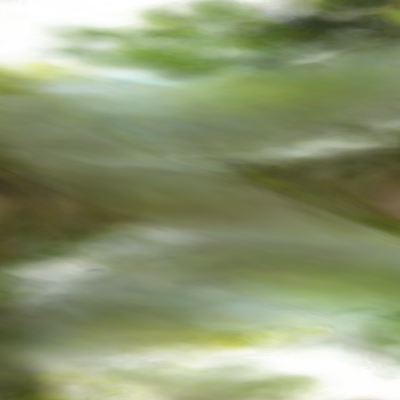} \vspace{0.15cm}
        \\
        \rotatebox{90}{\scriptsize \quad Weights $w_i$} &
        \includegraphics[width=0.29\linewidth]{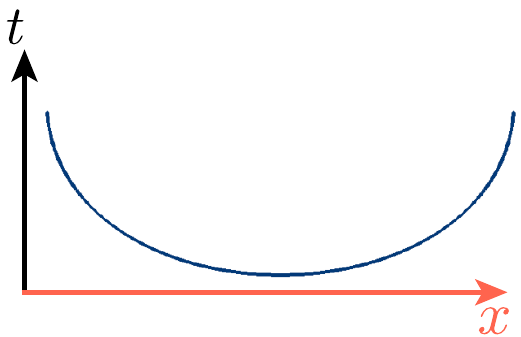} & 
        \includegraphics[width=0.29\linewidth]{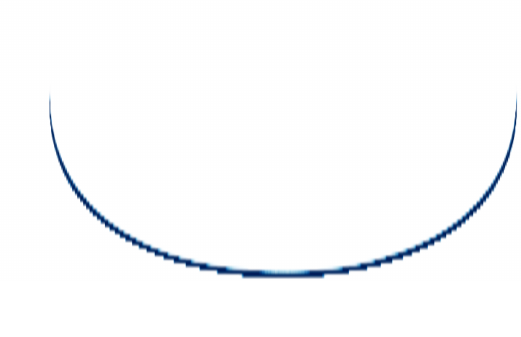} & 
        \includegraphics[width=0.29\linewidth]{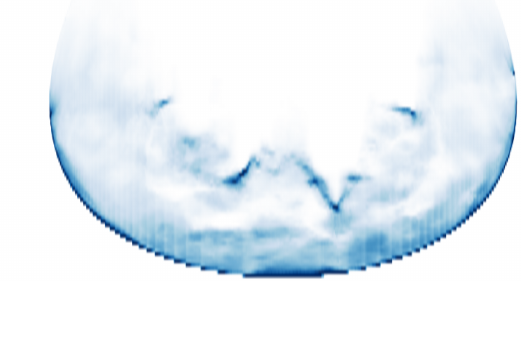}
     \vspace{-0.05cm}
    \end{tabular}
    \hbox{\hspace{1em}\includegraphics[width=0.93\linewidth]{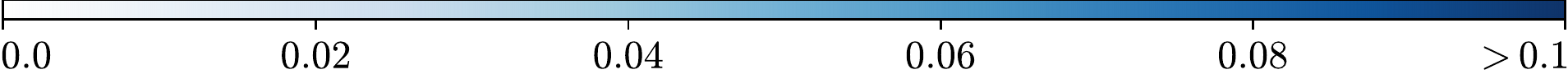}}
    \caption{Prior top-performing NeRF-based approaches can fail catastrophically in highly-reflective scenes. Mip-NeRF (right column) produces blurry renderings of reflections that are inconsistent over different views (see EPI), and does not correctly simulate the appearance of the two different surface roughnesses. Our model (middle column) reconstructs the object almost perfectly. The accumulated normals and rendering weights $w_i$ along the central scanline of the image (bottom row) show that mip-NeRF mimics specularities using emitters inside the object, while Ref-NeRF correctly recovers a concentrated surface.
    }
    \label{fig:phong}
\end{figure}

\section{Accurate Normal Vectors} \label{sec:normals}

While the structuring of outgoing radiance described in the previous section provides a better parameterization for the interpolation of specularities, it relies on a good estimation of volume density for facilitating accurate reflection direction vectors. However, the volume density field recovered by NeRF-based models suffers from two limitations: 1) normal vectors estimated from its volume density gradient as in Equation~\ref{eq:normals} are often extremely noisy (Figures~\ref{fig:teaser} and~\ref{fig:phong}); and 2) NeRF tends to ``fake'' specular highlights by embedding emitters inside the object and partially occluding them with a ``foggy'' diffuse surface (see Figure~\ref{fig:phong}). This is a suboptimal explanation, as it requires diffuse content on the surface to be semitransparent so that the embedded emitter can ``shine through''.

We address the first issue by using predicted normals for computing reflection directions: for each position $\vx_i$ along a ray we output a $3$-vector from the spatial MLP, which we then normalize to get a predicted normal $\uvn_i'$. We tie these predicted normals to the underlying density gradient normal samples along each ray $\{\uvn_i\}$ using a simple penalty:
\begin{equation} \label{eq:predictloss}
    \mathcal{R}_{\mathrm{p}} = \sum_i w_i \|\uvn_i - \uvn'_i\|^2,
\end{equation}
where $w_i$ is the weight of the $i$th sample along the ray, as defined in Equation~\ref{eq:nerfcolor}. These MLP-predicted normals tend to be smoother than gradient density normals because the gradient operator acts as a high-pass filter on the MLP's effective interpolation kernel~\cite{tancik2020fourfeat}. 

We address the second issue by introducing a novel regularization term that penalizes normals that are ``back-facing'', \ie oriented away from the camera, at samples along the ray that contribute to the ray's rendered color:
\begin{equation} \label{eq:orientloss}
    \mathcal{R}_{\mathrm{o}} = \sum_i w_i \max(0, \uvn'_i\cdot\viewdir)^2.
\end{equation}

This regularization acts as a penalty on ``foggy'' surfaces: samples are penalized when they are ``visible'' (high $w_i$) and the volume density is decreasing along the ray (\ie the dot product between $\uvn_i'$ and ray direction $\viewdir$ is positive). This normal orientation penalty prevents our method from explaining specularities as emitters hidden beneath a semitransparent surface, and the resulting improved normals enable Ref-NeRF to compute accurate reflection directions for use in querying the directional MLP.

Throughout the paper, we use the gradient density normals for visualization and quantitative evaluation, as they directly demonstrate the quality of the underlying recovered scene geometry.

\section{Experiments}
\label{sec:results}

We implement our model on top of mip-NeRF~\cite{barron2021mipnerf}, an improved version of NeRF that reduces aliasing. We use the same spatial MLP architecture as mip-NeRF (8 layers, 256 hidden units, ReLU activations), but we use a larger directional MLP (8 layers, 256 hidden units, ReLU activations) than mip-NeRF to better represent high-frequency reflected radiance distributions. Please refer to our supplement for additional baseline implementation details. 

We use the same quantitative metrics as previous view synthesis works~\cite{mildenhall2020nerf,barron2021mipnerf,zhang2021physg}: PSNR, SSIM~\cite{wang2004image}, and LPIPS~\cite{zhang2018lpips} are used for evaluating rendering quality, and mean angular error (MAE) is used for evaluating estimated normal vectors.

\paragraph{Shiny Blender Dataset}
Though the ``Blender'' dataset used by NeRF~\cite{mildenhall2020nerf} contains a variety of objects with complex geometry, it is severely limited in terms of material variety: most scenes are largely Lambertian. To probe more challenging material properties, we have created an additional ``Shiny Blender" dataset with $6$ different glossy objects rendered in Blender under conditions similar to NeRF's dataset (100 training and 200 testing images per scene). The quantitative results in Table~\ref{tab:shiny_mean} highlight the significant advantage of our model over mip-NeRF, the previous top-performing technique, for rendering novel views of these highly specular scenes. We also include three improved versions of mip-NeRF, all of which have an 8-layer directional MLP and, respectively: 1) no additional components; 2) normal vectors appended to the view direction in the directional MLP (as was done in IDR~\cite{yariv2020idr} and VolSDF~\cite{yariv2021volume}); and 3) our orientation loss applied to mip-NeRF's density gradient normal vectors. Our method significantly outperforms all of these improved variants of this previously top-performing neural view synthesis method, both in terms of novel view rendering quality and normal vector accuracy. 
Although PhySG~\cite{zhang2021physg} recovers more accurate normals, it requires ground-truth object masks (all other methods only require RGB images) and produces significantly worse renderings.
Figure~\ref{fig:phong} showcases the impact of our approach using one object from our dataset: while mip-NeRF~\cite{barron2021mipnerf} fails to recover the geometry and appearance of this simple metallic sphere with two roughnesses, our method produces a nearly perfect reconstruction. Figure~\ref{fig:coffee} displays another visual example from this dataset that showcases our model's improvements to recovered normal vectors and rendered specularities. 

\begin{table}[t!]
\centering
\resizebox{\linewidth}{!}{
\begin{tabular}{@{}l|ccc|cr}
& \!PSNR $\uparrow$\! & \!SSIM $\uparrow$\! & \!LPIPS $\downarrow$\! & \!$\text{MAE}^\circ$ $\downarrow$\! \\ \hline
PhySG~\cite{zhang2021physg} (requires object masks)                        &                    26.21 &                    0.921 &                   0.121 & \cellcolor{tablered}    8.46 \\ \hline
Mip-NeRF~\cite{barron2021mipnerf}                  &                    29.76 &                    0.942 &                    0.092 &                    60.38 \\
Mip-NeRF, 8 layers             &                    31.59 &                    0.956 &                    0.072 &                    58.07 \\
Mip-NeRF, 8 layers, w/ normals &                    31.39 &                    0.955 &                    0.074 &                    58.27 \\
Mip-NeRF, 8 layers, w/ $\mathcal{R}_o$ &                    31.48 &                    0.955 &                    0.073 &                    57.37 \\ \hline
Ours, no reflection            &                    29.47 &                    0.944 &                    0.084 & \cellcolor{orange}16.19 \\
Ours, no $\mathcal{R}_o$       &                    31.62 &                    0.954 &                    0.078 &                    52.56 \\
Ours, no pred. normals         &                    30.91 &                    0.936 &                    0.105 &                    30.67 \\
Ours, concat. viewdir          &                    35.42 & \cellcolor{yellow}0.966 &                   0.061 &                    21.25 \\
Ours, fixed lobe               & \cellcolor{yellow}35.52 &                    0.965 &                   0.061 &                    26.46 \\ \hline
Ours, no diffuse color         &                    33.32 &                    0.962 &                    0.067 &                    26.13 \\
Ours, no tint                  &                    35.45 &                    0.965 & \cellcolor{orange}0.060 &                    22.70 \\
Ours, no roughness             &                    33.39 &                    0.963 &                    0.065 &                    25.96 \\
Ours, PE                       & \cellcolor{orange}35.90 & \cellcolor{tablered}0.968 & \cellcolor{tablered}0.058 &                   20.31 \\
Ours                           & \cellcolor{tablered}35.96 & \cellcolor{orange}0.967 & \cellcolor{tablered}0.058 & \cellcolor{yellow}18.38 \\
\end{tabular}

}
\vspace{-0.1in}
\caption{
Baseline comparisons and ablation study on our ``Shiny Blender'' dataset.
}
\label{tab:shiny_mean}
\vspace{-0.2in}
\end{table}

Table~\ref{tab:shiny_mean} also contains a quantitative ablation study of our model. If we use view directions instead of reflection directions as the directional MLP's input (``no reflection"), our method's reconstruction metrics drop significantly, showing the benefits of our reflected radiance parameterization. Removing the orientation loss (``no $\mathcal{R}_o$") results in severely degraded normals and renderings, and applying the orientation loss directly to the density field's normals and using those to compute reflection directions (``no pred. normals'') also reduces performance. Including the view direction as input to the directional MLP in addition to the IDE (``concat. viewdir'') slightly decreases performance, demonstrating the difficulty of parameterizing specular appearance as a function of viewing direction. On the other hand, not feeding $\uvn'\cdot\outdir$ to the directional MLP (``fixed lobe'') also slightly reduces performance. Finally, removing our structural components of outgoing radiance (roughness, diffuse color, or tint) and replacing our IDE with a non-integrated directional encoding or NeRF's standard positional encoding all slightly decrease performance.

\begin{table}[]
\centering
\resizebox{\linewidth}{!}{
\small

\begin{tabular}{@{}l|ccc|cr}
& \!PSNR $\uparrow$\! & \!SSIM $\uparrow$\! & \!LPIPS $\downarrow$\! & \!$\text{MAE}^\circ$ $\downarrow$\! \\ \hline
PhySG~\cite{zhang2021physg} (requires object masks)                        &                    20.60 &                    0.861 &                    0.144  &   29.17 \\
VolSDF~\cite{yariv2021volume}                         &                    27.96 &                    0.932 &                    0.096  & \cellcolor{tablered}19.45 \\
NSVF~\cite{liu2020nsvf}                         &                    31.74 &                    0.953 &                    0.047  &   -- \\
NeRF~\cite{mildenhall2020nerf}                         &                    32.38 &                    0.957 &                    0.046  &   -- \\
Mip-NeRF~\cite{barron2021mipnerf}                       &                    \cellcolor{yellow}33.09 &                 \cellcolor{yellow}0.961 &                    \cellcolor{yellow}0.043 &                    38.30 \\
\hline
Ours, PE                       & \cellcolor{orange}33.90 &                           \cellcolor{orange}0.965 &                    \cellcolor{orange}0.039  & \cellcolor{yellow}24.16 \\
Ours                           & \cellcolor{tablered}33.99 & \cellcolor{tablered}0.966 & \cellcolor{tablered}0.038 & \cellcolor{orange}23.22 \\
\end{tabular}
}
\vspace{-0.1in}
\caption{
Results for our method compared to previous approaches on the Blender dataset~\cite{mildenhall2020nerf}.
}
\label{tab:blendermean}
\end{table}

\begin{figure}
    \centering
    \begin{tabular}{@{}c@{\,}@{\,}c@{\,}|@{\,}c@{\,}@{\,}c@{\,}@{\,}c@{}}   & \scriptsize Ground truth   & \scriptsize Ours & \scriptsize Mip-NeRF~\cite{barron2021mipnerf} & \scriptsize VolSDF~\cite{yariv2021volume}\\
       & \scriptsize PSNR$\uparrow$/MAE$\downarrow$ & \scriptsize $\mathbf{33.6}$\textbf{dB}/$\mathbf{34.9^\circ}$ & \scriptsize $31.1$dB/$50.4^\circ$ & \scriptsize $22.8$dB/$40.3^\circ$\\
        \rotatebox{90}{\scriptsize \qquad\qquad\quad Normals \qquad\qquad\qquad\qquad\quad Rendering} &       
        \includegraphics[width=0.358\linewidth]{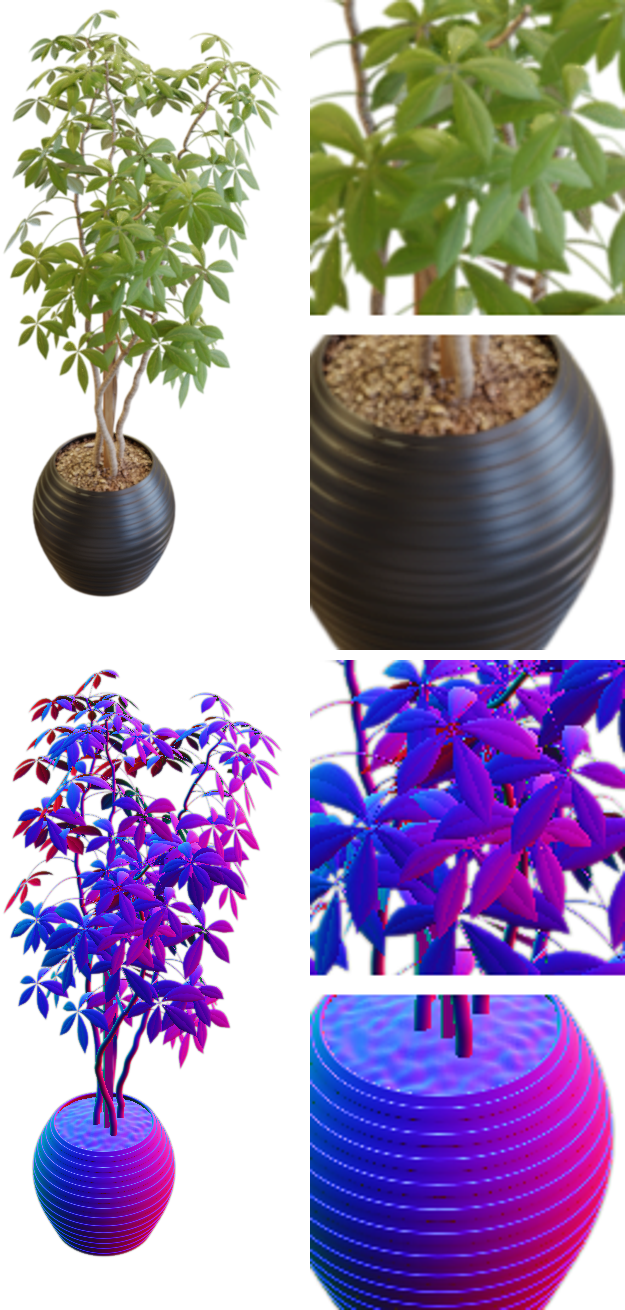} & 
        \includegraphics[width=0.18\linewidth]{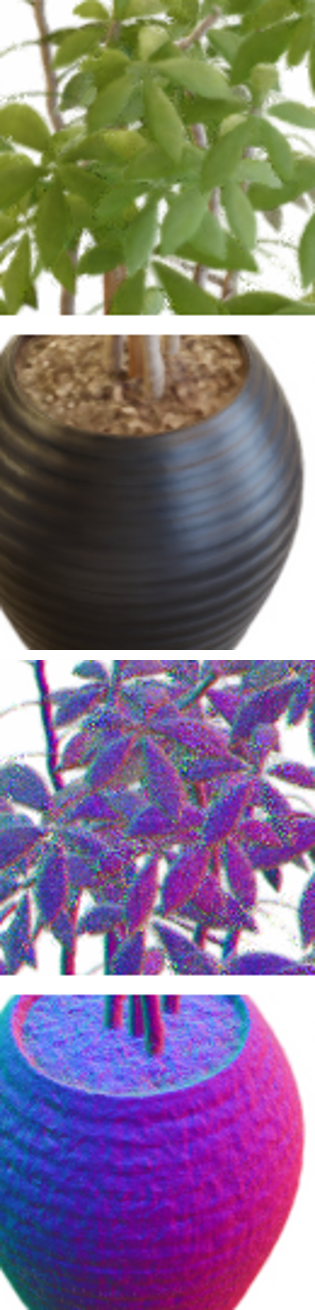} & 
        \includegraphics[width=0.18\linewidth]{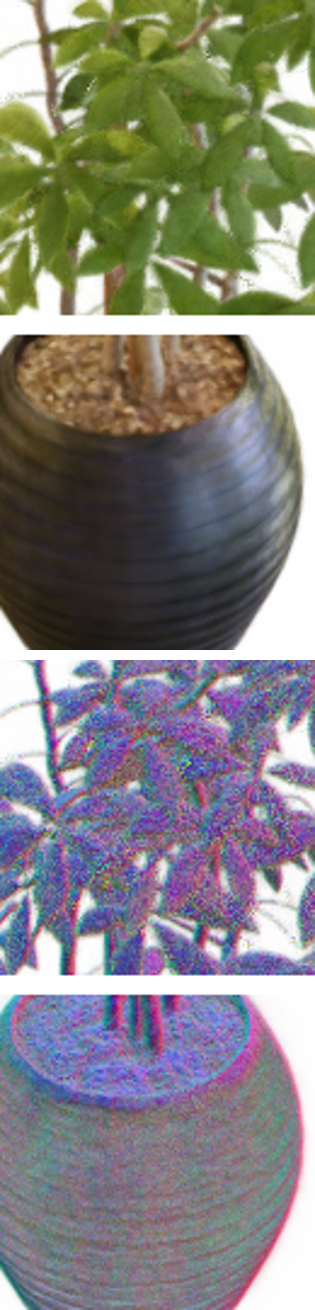} &
        \includegraphics[width=0.18\linewidth]{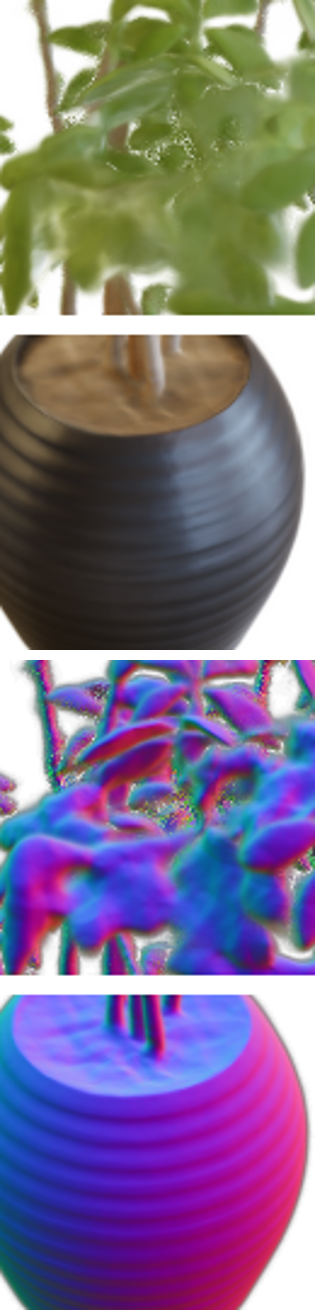}
        \\
    \end{tabular}
    \caption{Our model renders accurate glossy appearance and recovers fine geometric details. VolSDF~\cite{yariv2020idr} estimates accurate specularities and normal vectors but often fails to capture fine-scale details, such as leaves. Mip-NeRF~\cite{barron2021mipnerf} is able to capture fine structures but fails to faithfully render specular highlights (such as those on the pot and leaves) in novel views, and does not recover accurate normal vectors.
    } 
    \label{fig:ficus}
    \vspace{-0.1in}
\end{figure}

\begin{figure*}[t]
    \centering
    \includegraphics[width=\linewidth]{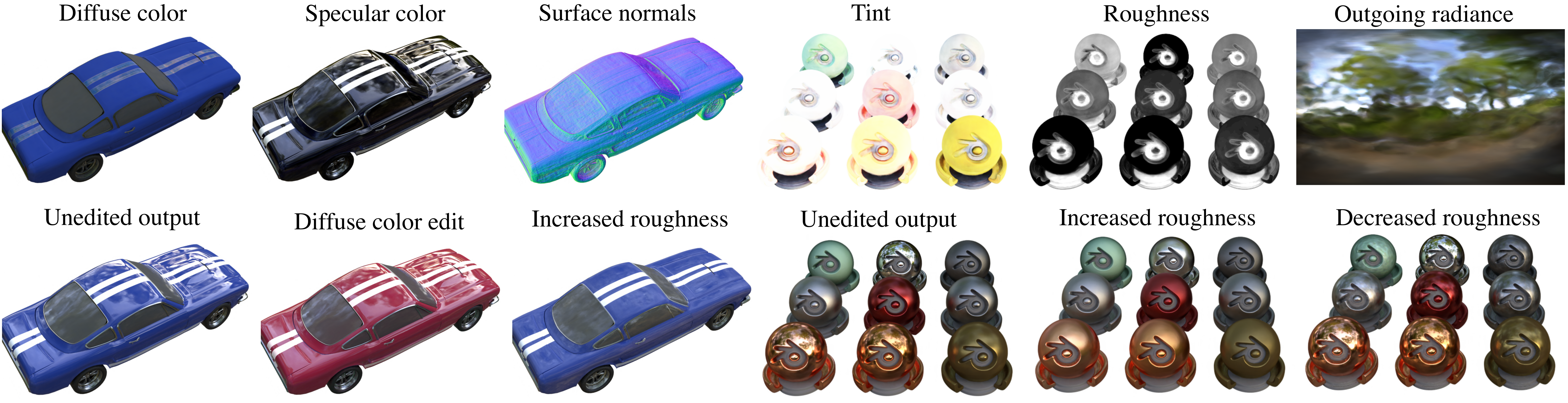}
    \caption{
    Our model's structuring of outgoing radiance decomposes scene appearance into interpretable components (top row) that enable editing (bottom row). Note that the recovered outgoing radiance function for a point on the chrome material ball (top right) is a plausible reconstruction of the actual scene lighting. We can edit the diffuse color of the car without affecting the specular reflections off its glossy paint, and we can plausibly modify the roughness of the car and material balls by manipulating the $\kappa$ values used in the IDE.
    }
    \label{fig:editing}
\end{figure*}

\begin{figure}
    \centering
    \begin{tabular}{c}
    \rotatebox{90}{\scriptsize \qquad\,\, Ground Truth \qquad\qquad\qquad\quad Ours \qquad\qquad\qquad\quad Mip-NeRF~\cite{barron2021mipnerf} } 
    \begin{overpic}[width=0.95\linewidth]{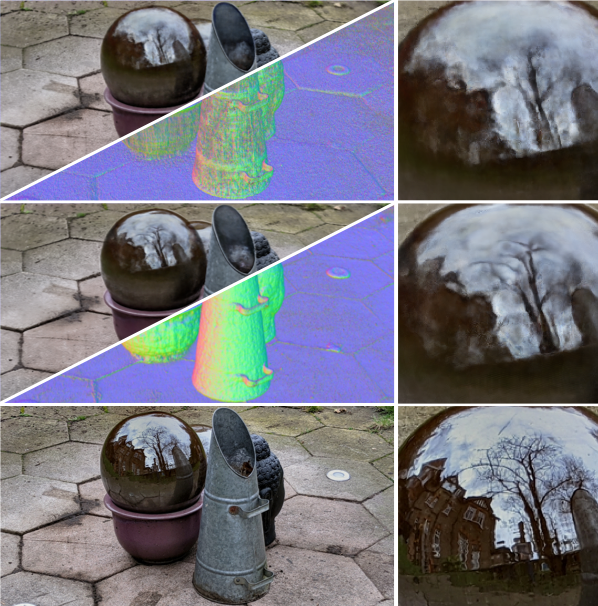} 
	\put(88.8, 67.8){\scriptsize \hl{$19.9$dB}}
	\put(88.1, 34.3){\scriptsize \hl{$\mathbf{20.8}$dB}}
    \end{overpic}
    \end{tabular}
    \caption{In this ``garden spheres'' scene, mip-NeRF's foggy geometry (see rendered normals) leads to blurred reflections (see inset, with PSNRs), while our model is able to recover accurate normal vectors and render more realistic reflections.
    } 
    \label{fig:real}
\end{figure}

\begin{figure}
    \centering
    \begin{tabular}{@{}c@{\,}@{\,}c@{\,}@{\,}c@{}}   \scriptsize Ground truth   & \scriptsize Ours & \scriptsize Mip-NeRF~\cite{barron2021mipnerf} \\
    \includegraphics[width=0.33\linewidth]{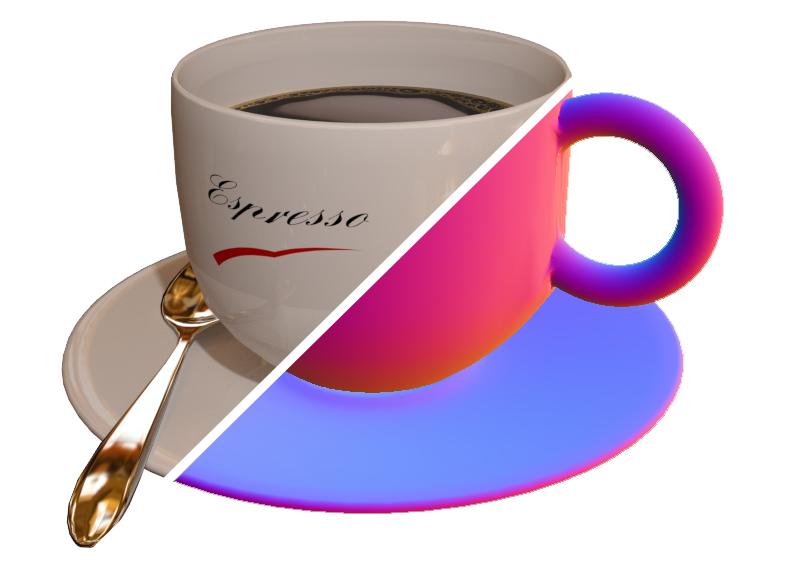} &
    \includegraphics[width=0.33\linewidth]{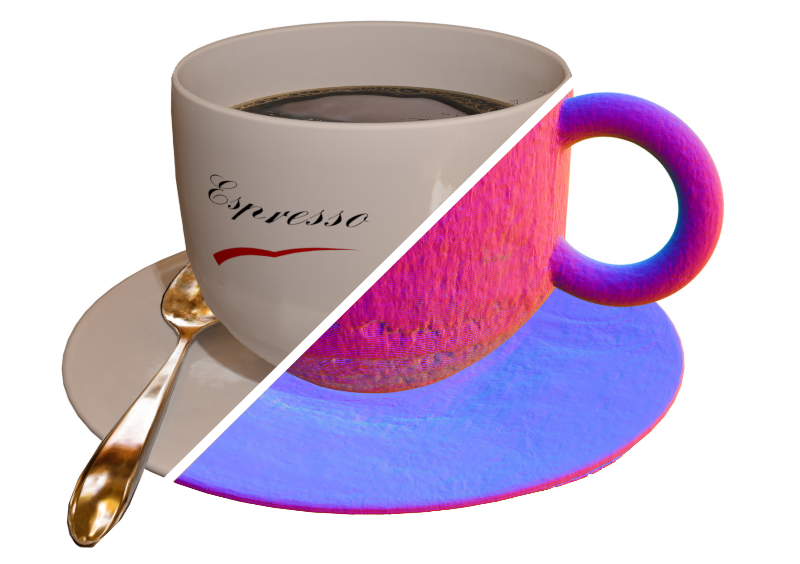} &
    \includegraphics[width=0.33\linewidth]{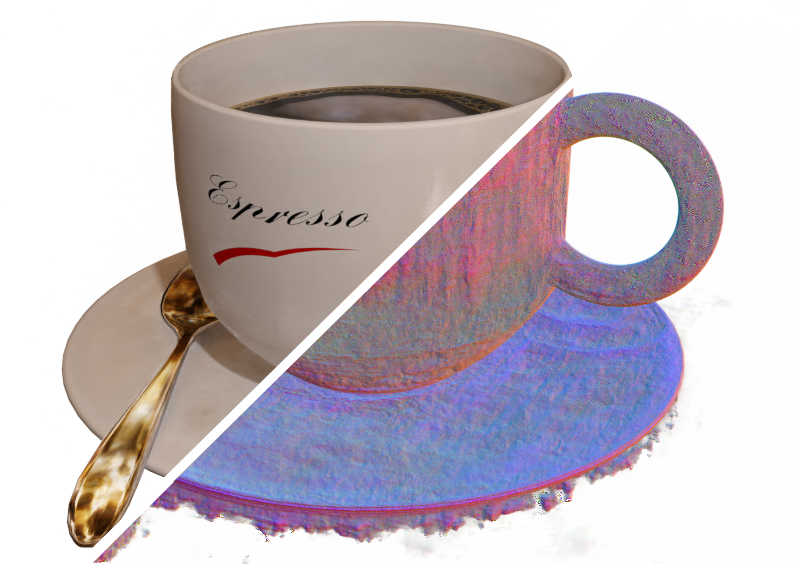}
    \end{tabular}
    \caption{On the ``coffee" scene from our ``Shiny Blender" dataset our method succeeds at estimating normals and interpolating specularities, whereas mip-NeRF~\cite{barron2021mipnerf} fails at doing both (see reflections on the spoon for example).} 
    \label{fig:coffee}
\end{figure}

\paragraph{Blender Dataset}

We also compare Ref-NeRF to recent neural view synthesis baseline methods on the standard Blender dataset from the original NeRF paper~\cite{mildenhall2020nerf}. Table~\ref{tab:blendermean} shows that our method outperforms all prior work across all image quality metrics. Our method also yields a large (35\%) improvement in the MAE of its normal vectors relative to mip-NeRF. While the hybrid surface-volume VolSDF representation~\cite{yariv2021volume} recovers slightly more accurate normal vectors (15\% lower MAE), our PSNR is substantially higher (6dB) than theirs. Additionally, VolSDF tends to oversmooth geometry, which makes our results qualitatively superior upon inspection (see Figure~\ref{fig:ficus}).

\paragraph{Real Captured Scenes}
In addition to these two synthetic datasets, we evaluate our model on a set of $3$ real captured scenes. We capture a ``sedan'' scene, and use the ``garden spheres'' and ``toy car'' captures from the Sparse Neural Radiance Grids paper~\cite{hedman2021snerg}. Figure~\ref{fig:real} and our supplement demonstrate that our rendered specular reflections and recovered normal vectors are often much more accurate on these real-world scenes.

\paragraph{Scene Editing}

Our structuring of outgoing radiance enables view-consistent editing of scenes. Although we do not perform a full inverse-rendering decomposition of appearance into BRDFs and lighting, our individual components behave intuitively and enable visually plausible scene editing results which are not attainable from a standard NeRF. Figure~\ref{fig:editing} shows example edits of the scene's components, and our supplementary video contains additional examples that demonstrate the view-consistency of our edited models.

\paragraph{Limitations}

While Ref-NeRF significantly improves upon previous top-performing neural scene representations for view synthesis, it requires increased computation: evaluating our integrated directional encoding is slightly slower than computing a standard positional encoding, and backpropagating through the gradient of the spatial MLP to compute normal vectors makes our model roughly 25\% slower than mip-NeRF. Our reparameterization of outgoing radiance by the reflection direction does not explicitly model interreflections or non-distant illumination, so our improvement upon mip-NeRF is reduced in such cases.

\section{Conclusion}
\label{sec:conclusion}

We have demonstrated that prior neural representations for view synthesis fail to accurately represent and render scenes with specularities and reflections. Our model, Ref-NeRF, introduces a new parameterization and structuring of view-dependent outgoing radiance, as well as a regularizer on normal vectors. These contributions allow Ref-NeRF to significantly improve both the quality of view-dependent appearance and the accuracy of normal vectors in synthesized views of the scene. We believe that this work makes important progress towards capturing and reproducing the rich photorealistic appearance of objects and scenes.

\paragraph{Acknowledgements}

We would like to thank Lior Yariv and Kai Zhang for helping us evaluate their methods, and Ricardo Martin-Brualla for helpful comments on our text. DV is supported by the National Science Foundation under Cooperative Agreement PHY-2019786 (an NSF AI Institute, \url{http://iaifi.org}).

{\small
\bibliographystyle{ieee_fullname}
\bibliography{egbib}
}

\clearpage

\appendix

\setcounter{equation}{0}
\setcounter{table}{0}
\setcounter{figure}{0}
\def\theequation{S\arabic{equation}}
\def\thetable{S\arabic{table}}
\def\thefigure{S\arabic{figure}}

\section{Integrated Directional Encoding Proofs}

We begin by proving the expression for the expected value of a spherical harmonic under a vMF distribution in Equation~7 in our main paper.
\begin{claim}
The expected value of a spherical harmonic function $Y_\ell^m(\uvomega)$ under a vMF distribution with mean $\refdir$ and concentration parameter $\kappa$ is:
\begin{equation}
\mathbb{E}_{\uvomega\sim\operatorname{vMF}(\refdir, \kappa)}[Y_\ell^m(\uvomega)] = A_\ell(\kappa)Y_\ell^m(\refdir),
\end{equation}
where:
\begin{equation} \label{eq:attenuationlegendre}
    A_\ell(\kappa) = \frac{\kappa}{2\sinh{\kappa}} \int_{-1}^1 P_\ell(u) e^{\kappa u}du,
\end{equation}
with $P_\ell$ the $\ell$th Legendre polynomial.
\end{claim}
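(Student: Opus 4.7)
The plan is to recognize the integrand as involving a zonal kernel in $\uvomega$ and apply the Funk--Hecke theorem (or an equivalent Legendre-expansion argument) to reduce the spherical integral to a one-dimensional integral in which the harmonic $Y_\ell^m$ decouples cleanly.

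First I would write out the expectation using the explicit vMF density $p(\uvomega) = (\kappa/(4\pi \sinh\kappa))\, e^{\kappa\, \refdir\cdot\uvomega}$, verifying the normalizing constant by the elementary computation $\int_{S^2} e^{\kappa\, \refdir\cdot\uvomega}\, d\uvomega = 2\pi \int_{-1}^{1} e^{\kappa u}\, du = 4\pi \sinh\kappa / \kappa$, which follows from switching to polar coordinates aligned with $\refdir$ (so that $u = \refdir\cdot\uvomega$ becomes the cosine of the polar angle).

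The central step is to expand the zonal factor $e^{\kappa\, \refdir \cdot \uvomega}$ in Legendre polynomials of $u = \refdir\cdot\uvomega$, namely $e^{\kappa u} = \sum_{\ell'} \alpha_{\ell'} P_{\ell'}(u)$ with $\alpha_{\ell'} = \tfrac{2\ell'+1}{2}\int_{-1}^{1} e^{\kappa u} P_{\ell'}(u)\, du$, and then convert each $P_{\ell'}(\refdir \cdot \uvomega)$ into a bilinear form in spherical harmonics via the addition theorem $P_{\ell'}(\refdir\cdot\uvomega) = \tfrac{4\pi}{2\ell'+1}\sum_{m'} Y_{\ell'}^{m'}(\refdir)\, \overline{Y_{\ell'}^{m'}(\uvomega)}$. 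Substituting this decomposition into the integral and invoking orthonormality of the spherical harmonics on $S^2$ collapses the double sum to the single term $(\ell', m') = (\ell, m)$, producing a factor of $Y_\ell^m(\refdir)$ times $\tfrac{4\pi}{2\ell+1}\alpha_\ell$. Multiplying by the normalization $\kappa/(4\pi \sinh\kappa)$ yields exactly $A_\ell(\kappa)\, Y_\ell^m(\refdir)$ with $A_\ell$ as stated.

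There is no deep obstacle; the only points that need care are keeping the complex-conjugation conventions in the addition theorem consistent (for real spherical harmonics the conjugates simply drop out) and noting that the Legendre expansion of $e^{\kappa u}$ converges uniformly on $[-1, 1]$, which justifies termwise integration. A more compact alternative is to cite the Funk--Hecke theorem directly, which says $\int_{S^2} f(\refdir\cdot\uvomega)\, Y_\ell^m(\uvomega)\, d\uvomega = 2\pi\, Y_\ell^m(\refdir) \int_{-1}^{1} f(u) P_\ell(u)\, du$ for any zonal $f$, and specialize it to $f(u) = e^{\kappa u}$; I would probably include the short self-contained derivation so the supplement does not rely on an external reference.
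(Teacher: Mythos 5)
Your proposal is correct, but it reaches the result by a different route than the supplement. The paper's proof rotates coordinates so that $\refdir$ aligns with the $z$-axis, expands the rotated harmonic as $Y_\ell^m(R^{-1}\uvomega) = \sum_{m'} D_{mm'}^{(\ell)}(\refdir)\, Y_\ell^{m'}(\uvomega)$ using Wigner D-matrices, kills every $m'\neq 0$ term by azimuthal integration, and then substitutes the explicit identities $Y_\ell^0(\uvomega) = \sqrt{\tfrac{2\ell+1}{4\pi}}P_\ell(\cos\theta)$ and $D_{m0}^{(\ell)}(\refdir) = \sqrt{\tfrac{4\pi}{2\ell+1}}\,Y_\ell^m(\refdir)$. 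You instead keep the frame fixed, exploit that $e^{\kappa\,\refdir\cdot\uvomega}$ is zonal, and either invoke Funk--Hecke directly or expand $e^{\kappa u}$ in Legendre polynomials and use the addition theorem plus orthonormality to isolate the $(\ell,m)$ term; the bookkeeping with $\alpha_\ell = \tfrac{2\ell+1}{2}\int_{-1}^1 e^{\kappa u}P_\ell(u)\,du$ and the factor $\tfrac{4\pi}{2\ell+1}$ indeed reproduces $A_\ell(\kappa)$ exactly, and your cautions about conjugation conventions and uniform convergence (justifying termwise integration of an entire function's Legendre series) are the right ones. The two arguments are essentially dual: the $D_{m0}^{(\ell)}$ identity the paper uses is the same fact that powers the addition theorem, so neither proof is deeper than the other. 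Your route buys a shorter, more standard derivation that avoids introducing Wigner D-matrices at all (Funk--Hecke states the conclusion for any zonal kernel in one line); the paper's rotation argument buys a self-contained computation that makes the surviving $m'=0$ term and the resulting one-dimensional integral over $\cos\theta$ visible explicitly, which is pedagogically convenient for readers unfamiliar with Funk--Hecke. Either version is a valid replacement for the supplement's Claim~1 proof.
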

\begin{proof}
We begin by first aligning the mean direction of the distribution $\refdir$ with the $z$-axis. We do this by applying a rotation matrix $R$ to transform $\uvomega' = R\uvomega$ where we choose $R$ that satisfies $R\refdir = \hat{z}$. Since the Jacobian of this transformation has unit determinant, we can write:
\begin{align} \label{eq:expdef}
\mathbb{E}_{\uvomega \sim \text{vMF}(\refdir, \kappa)}[Y_\ell^m(\uvomega)] &= c(\kappa)\int_{S^2} Y_\ell^m(\uvomega) e^{\kappa \refdir^\top \uvomega} d\uvomega \\
&= c(\kappa)\int_{S^2} Y_\ell^m(R^{-1}\uvomega) e^{\kappa \cos\theta}d\uvomega,\nonumber
\end{align}
where $c(\kappa) = \frac{\kappa}{4\pi \sinh{\kappa}}$ is the normalization factor of the vMF distribution, and $\theta$ is the angle between $\refdir$ and the $z$-axis.

A rotated spherical harmonic can be written as a linear combination of all spherical harmonics of the same degree, with coefficients specified by the Wigner D-matrix of the rotation:
\begin{align}
    Y_\ell^m(R^{-1}\uvomega) = \sum_{m'=-\ell}^\ell D_{mm'}^{(\ell)}(\refdir) Y_\ell^{m'}(\uvomega).
\end{align}

Plugging this into the expression from Equation~\ref{eq:expdef}:
\begin{align}
&\mathbb{E}_{\uvomega \sim \text{vMF}(\refdir, \kappa)}[Y_\ell^m(\uvomega)] \\
&=c(\kappa) \sum_{m'=-\ell}^\ell D_{mm'}^{(\ell)}(\refdir) \int_{S^2}   Y_\ell^{m'}(\uvomega) e^{\kappa \cos\theta}d\uvomega. \nonumber
\end{align}

The azimuthal dependence of the integrand is periodic in $2\pi$ for any $m'\neq 0$, so the only integral that does not vanish is the one with $m' = 0$, yielding:
\begin{align}
&\mathbb{E}_{\uvomega \sim \text{vMF}(\refdir, \kappa)}[Y_\ell^m(\uvomega)] \\
&= c(\kappa) D_{m0}^{(\ell)}(\refdir) \int_{S^2}   Y_\ell^{0}(\uvomega) e^{\kappa \cos\theta}d\uvomega. \nonumber
\end{align}

Plugging the known expression for the $\ell$th degree $0$th order spherical harmonic $Y_\ell^0(\uvomega) = \sqrt{\frac{2\ell+1}{4\pi}}P_\ell(\cos\theta)$ and the corresponding elements of the Wigner D-matrix $D_{m0}^{(\ell)}(\refdir) = \sqrt{\frac{4\pi}{2\ell+1}} Y_\ell^m(\refdir)$, and integrating over the azimuthal angle, we get:
\begin{align}
&\mathbb{E}_{\uvomega \sim \text{vMF}(\refdir, \kappa)}[Y_\ell^m(\uvomega)] \\
&= 2\pi c(\kappa) Y_\ell^m(\refdir) \int_0^\pi   P_\ell(\cos\theta) e^{\kappa \cos\theta} \sin\theta d\theta \nonumber \\
&= 2\pi c(\kappa) Y_\ell^m(\refdir) \int_{-1}^1 P_\ell(u) e^{\kappa u} du \nonumber \\
&= \frac{\kappa}{2\sinh{\kappa}} Y_\ell^m(\refdir) \int_{-1}^1 P_\ell(u) e^{\kappa u} du \nonumber \\
&= A_\ell(\kappa) Y_\ell^m(\refdir), \nonumber
\end{align}
where the second equality was obtained using the change of variables $u = \cos\theta$.
\end{proof}

Next, we show that the integral has a closed-form solution, leading to a simple expression for the $\ell$th attenuation function, $A_\ell(\kappa)$.
\begin{claim} \label{claim:attenuation}
For any $\ell \in \mathbb{N}$ the attenuation function $A_\ell$ satisfies:
\begin{equation} \label{eq:attenuationexact}
   A_\ell(\kappa) = \kappa^{-\ell} \sum_{i=0}^\ell \frac{(2\ell-i)!}{i!(\ell-i)!} \left(-2\right)^{i-\ell} b_i(\kappa),
\end{equation}
where $b_i(\kappa) = \kappa^{i}$ for even values of $i$ and $b_i(\kappa) = \kappa^{i}\coth(\kappa)$ for odd $i$.
\end{claim}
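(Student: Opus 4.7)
The plan is to evaluate the integral in Equation~\ref{eq:attenuationlegendre} by repeated integration by parts, exploiting the fact that $P_\ell$ is a polynomial of degree $\ell$, and then to simplify the result using the classical closed form for $P_\ell^{(k)}(1)$.

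First I would establish the antiderivative identity for any polynomial $f$ of degree $\ell$ against an exponential,
\begin{equation*}
\int f(x)\, e^{\kappa x}\, dx = e^{\kappa x} \sum_{j=0}^{\ell} \frac{(-1)^j}{\kappa^{j+1}}\, f^{(j)}(x) + C,
\end{equation*}
which follows by iterating integration by parts until the derivative of $f$ vanishes. Specializing to $f = P_\ell$, evaluating at the endpoints $\pm 1$, and using the parity identity $P_\ell^{(j)}(-1) = (-1)^{\ell-j}\, P_\ell^{(j)}(1)$ (an immediate consequence of $P_\ell(-x) = (-1)^\ell P_\ell(x)$), I obtain
\begin{equation*}
\int_{-1}^1 P_\ell(x)\, e^{\kappa x}\, dx = \sum_{j=0}^{\ell} \frac{(-1)^j}{\kappa^{j+1}}\, P_\ell^{(j)}(1) \left[e^\kappa - (-1)^{\ell-j} e^{-\kappa}\right].
\end{equation*}

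Next I would multiply by $\kappa/(2\sinh\kappa)$ as in Equation~\ref{eq:attenuationlegendre} and split the sum according to the parity of $\ell - j$: when $\ell - j$ is even the bracket equals $2\sinh\kappa$ and the prefactor cancels, while when $\ell - j$ is odd the bracket equals $2\cosh\kappa$, producing a factor of $\coth\kappa$. This is precisely the structure encoded in the functions $b_i(\kappa)$. Reindexing with $i = \ell - j$ yields
\begin{equation*}
A_\ell(\kappa) = \kappa^{-\ell} \sum_{i=0}^{\ell} (-1)^{\ell - i}\, P_\ell^{(\ell - i)}(1)\, b_i(\kappa).
\end{equation*}

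Finally I would substitute the classical formula $P_\ell^{(k)}(1) = \frac{(\ell + k)!}{2^k\, k!\, (\ell - k)!}$ with $k = \ell - i$; this can be derived from Rodrigues' formula $P_\ell(u) = \frac{1}{2^\ell \ell!}\frac{d^\ell}{du^\ell}(u^2 - 1)^\ell$ by applying the Leibniz rule to the factorization $(u-1)^\ell (u+1)^\ell$ at $u = 1$, where only the single term that distributes all remaining derivatives onto $(u-1)^\ell$ survives. After substitution, the sign $(-1)^{\ell - i}$ combines with $2^{-(\ell - i)}$ into $(-2)^{i - \ell}$, giving exactly the coefficient $\frac{(2\ell - i)!}{i!\,(\ell - i)!}\,(-2)^{i-\ell}$ that appears in Equation~\ref{eq:attenuationexact}. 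There is no real obstacle here: the whole argument is bookkeeping built on a one-line integration-by-parts identity, a parity statement, and a standard formula for Legendre derivatives at the boundary; the only step where care is needed is tracking the sign and the power of $2$ through the reindexing $j \mapsto i = \ell - j$.
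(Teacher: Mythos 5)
Your proof is correct, and it takes a genuinely different route from the paper's. The paper never evaluates the integral in closed form directly: it uses the Legendre identity $P_{\ell-1}=\frac{d}{du}\frac{P_\ell-P_{\ell-2}}{2\ell-1}$ together with one integration by parts to derive the recurrence $A_\ell(\kappa)=A_{\ell-2}(\kappa)-\frac{2\ell-1}{\kappa}A_{\ell-1}(\kappa)$, computes $A_0$ and $A_1$ explicitly, and then verifies Equation~\ref{eq:attenuationexact} by induction, matching the coefficient of $\kappa^{-m}$ case by case ($m=0$, $m=\ell-1$, $m=\ell$, and the middle range). You instead derive the formula constructively: the finite antiderivative $\int f e^{\kappa x}dx = e^{\kappa x}\sum_j \frac{(-1)^j}{\kappa^{j+1}}f^{(j)}(x)$, the parity relation $P_\ell^{(j)}(-1)=(-1)^{\ell-j}P_\ell^{(j)}(1)$ (which is what makes the even/odd split into $2\sinh\kappa$ versus $2\cosh\kappa$, i.e.\ the $b_i$ structure, fall out automatically), and the endpoint value $P_\ell^{(k)}(1)=\frac{(\ell+k)!}{2^k k!(\ell-k)!}$ from Rodrigues' formula; I checked the reindexing $i=\ell-j$ and the sign/power-of-two bookkeeping, and it reproduces the stated coefficient $\frac{(2\ell-i)!}{i!(\ell-i)!}(-2)^{i-\ell}$ exactly. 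Your argument is shorter, does not require guessing the closed form in advance, and avoids the paper's coefficient-matching case analysis; the paper's approach, on the other hand, yields the recurrence for $A_\ell$ as a useful byproduct (a stable way to generate the attenuation functions) and keeps all manipulations within the family $\{A_\ell\}$ rather than invoking properties of Legendre derivatives at the boundary.
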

\begin{proof}
We prove this by first finding a recurrence relation for the attenuation functions, and then proving our expression by induction.

From Equation~\ref{eq:attenuationlegendre}, the $(\ell-1)$th attenuation function can be written as:
\begin{align}
A_{\ell-1}(\kappa) &= \frac{\kappa}{2\sinh\kappa} \int_{-1}^1 P_{\ell-1}(u)e^{\kappa u}du \\
&= \frac{\kappa}{2\sinh\kappa} \int_{-1}^1 \left(\frac{d}{du}\frac{P_\ell(u) - P_{\ell-2}(u)}{2\ell-1}\right)e^{\kappa u}du \nonumber\\
&= \frac{\kappa}{2\sinh\kappa} \left(\frac{P_\ell(u) - P_{\ell-2}(u)}{2\ell-1}\right)e^{\kappa u} \Big|_{-1}^1 \nonumber\\
&\quad- \frac{\kappa^2}{2\sinh\kappa} \int_{-1}^1 \left(\frac{P_\ell(u) - P_{\ell-2}(u)}{2\ell-1}\right)e^{\kappa u}du \nonumber\\
&= -\frac{\kappa}{2\ell-1} \left(A_\ell(\kappa) - A_{\ell-2}(\kappa)\right). \nonumber
\end{align}
where the second equality was obtained using a known recurrence relation of the Legendre polynomials, the third was obtained using integration by parts, and the fourth by using the fact that $P_\ell(\pm1) - P_{\ell-2}(\pm1)$ = 0. Reordering, we get:
\begin{equation} \label{eq:recurrence}
    A_\ell(\kappa) = A_{\ell-2}(\kappa) - \frac{2\ell-1}{\kappa}A_{\ell-1}(\kappa).
\end{equation}

We can easily find the first two attenuation functions by directly computing the integrals:
\begin{align} \label{eq:explicitl01}
    A_{0}(\kappa) &= \frac{\kappa}{2\sinh\kappa} \int_{-1}^1 e^{\kappa u}du = 1, \\
    A_{1}(\kappa) &= \frac{\kappa}{2\sinh\kappa} \int_{-1}^1 u e^{\kappa u}du = \coth(\kappa) - \frac{1}{\kappa}.\nonumber
\end{align}

Finally, we prove our claim by induction using the recurrence relation in Equation~\ref{eq:recurrence}. It is easy to verify that our expression holds for $\ell=0$ and $\ell=1$ by plugging these values in Equation~\ref{eq:attenuationexact} and comparing with Equation~\ref{eq:explicitl01}.

We now assume that the relation holds for any $\ell-2$ and $\ell-1$ and prove it for $\ell \geq 2$. We can do this by considering the coefficient of $\kappa^{-m}$ for every $m \in \{0, ..., \ell\}$ of the right hand side of Equation~\ref{eq:recurrence}, and show that it is identical to the one from Equation~\ref{eq:attenuationexact}. We begin by separately considering $m=0$, $m=\ell-1$ and $m=\ell$, and then any other $m$. Note that for some $m$ values $\kappa^{-m}$ is multiplied by $\coth(\kappa)$, but this factor always matches in $A_{\ell-2}(\kappa)$, in $\frac{1}{\kappa}A_{\ell-1}(\kappa)$, and in $A_\ell(\kappa)$, so we neglect it.

For $m=0$, the only contribution to the coefficient of $\kappa^0$ is from $A_{\ell-2}$, which gives a coefficient of $1$, matching with the $\kappa^0$ coefficient of $A_\ell$.

For $m=\ell-1$ we only get a contribution from the second term, and the coefficient is:
\begin{align}
-(2\ell-1)&\frac{(2\ell-3)!}{1!(\ell-2)!}(-2)^{2-\ell} \\
&=\frac{(2\ell-1)(\ell-1)(2\ell-3)!}{1!(\ell-1)(\ell-2)!}\cdot 2 \cdot (-2)^{1-\ell} \nonumber\\
&=\frac{(2\ell-1)(2\ell-2)(2\ell-3)!}{1!(\ell-1)(\ell-2)!} (-2)^{1-\ell} \nonumber\\
&=\frac{(2\ell-1)!}{1!(\ell-1)!} (-2)^{1-\ell}, \nonumber
\end{align}
which matches with that of $A_\ell$ from Equation~\ref{eq:attenuationexact}.

Similarly, for $m=\ell$ the contribution is also from the second term, with the coefficient:
\begin{align}
-(2\ell-1)&\frac{(2\ell-2)!}{0!(\ell-1)! (-2)^{1-\ell}} \\
&= 2 \frac{(2\ell-1)!}{0!(\ell-1)!}(-2)^\ell \nonumber \\
&= 2 \frac{2\ell (2\ell-1)!}{0!\cdot 2\ell \cdot (\ell-1)!}(-2)^\ell \nonumber \\
&= \frac{(2\ell)!}{0!\ell!}(-2)^{-\ell} \nonumber,
\end{align}
and again this matches with the coefficient of $A_\ell$.

Finally, for any $m \in \{1, ..., \ell-2\}$, we have that the coefficient of $\kappa^{-m}$ on the right hand side of Equation~\ref{eq:recurrence} is:
\begin{align}
&\frac{(\ell+m-2)!}{(\ell-m-2)!m!}(-2)^{-m} \\
&\quad- (2\ell-1)\frac{(\ell+m-2)!}{(\ell-m)!(m-1)!}(-2)^{1-m} \nonumber \\
&= \frac{(-2)^{-m} (\ell+m-2)!}{(\ell-m)!m!} \nonumber \\
&\quad\cdot\left[(\ell-m)(\ell-m-1)+2m(2\ell-1)\right] \nonumber \\
&= \frac{(-2)^{-m} (\ell+m-2)!}{(\ell-m)!m!}(\ell+m)(\ell+m-1) \nonumber \\
&= \frac{(\ell+m)!}{(\ell-m)!m!}(-2)^{-m}, \nonumber
\end{align}
which is identical to the coefficient in Equation~\ref{eq:attenuationexact}, finishing our proof.

\end{proof}

Computing the attenuation function using Equation~\ref{eq:attenuationexact} is inefficient, and it is numerically unstable due to catastrophic cancellation. In Equation~8 of the main paper we present an approximation which is guaranteed to be close to the exact functions from Equation~\ref{eq:attenuationexact}, for large values of $\kappa$. We finish this section with a proof of this claim. Additionally, Figure~\ref{fig:attenuation_approximation} shows that our approximation closely resembles the exact attenuation functions for all values of $\kappa > 0$ and all orders $\ell$, and that its quality improves as $\ell$ increases.

\begin{claim}
For large values of $\kappa$, our approximation is exact up to an $O(\nicefrac{1}{\kappa^2})$ term, \ie:
\begin{equation}
A_\ell(\kappa) = \exp\left(-\frac{\ell(\ell+1)}{2\kappa}\right) + O\left(\frac{1}{\kappa^2}\right)
\end{equation}
\end{claim}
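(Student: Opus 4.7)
My plan is to expand $A_\ell(\kappa)$ as a power series in $1/\kappa$ by exploiting the closed-form expression in Claim~\ref{claim:attenuation}, and then show that its leading behavior matches the Taylor expansion of the proposed exponential up to $O(1/\kappa^2)$.

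First, I would replace $\coth(\kappa)$ by $1$ in every $b_i(\kappa)$ appearing in Equation~\ref{eq:attenuationexact}. Since $\coth(\kappa) = 1 + O(e^{-2\kappa})$ for large $\kappa$, this incurs only an exponentially small error, yielding
\begin{equation*}
A_\ell(\kappa) = \sum_{i=0}^\ell \frac{(2\ell-i)!}{i!(\ell-i)!}\,(-2)^{i-\ell}\,\kappa^{i-\ell} + O(e^{-2\kappa}).
\end{equation*}
Reindexing by $j = \ell - i$ converts this into a polynomial in $\kappa^{-1}$:
\begin{equation*}
A_\ell(\kappa) = \sum_{j=0}^\ell \frac{(\ell+j)!}{(\ell-j)!\,j!}\,(-2)^{-j}\,\kappa^{-j} + O(e^{-2\kappa}).
\end{equation*}
A direct computation shows the $j=0$ coefficient is $1$ and the $j=1$ coefficient is $-\ell(\ell+1)/2$, so $A_\ell(\kappa) = 1 - \ell(\ell+1)/(2\kappa) + O(1/\kappa^2)$. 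Taylor-expanding the proposed approximation gives the same first two terms, $\exp(-\ell(\ell+1)/(2\kappa)) = 1 - \ell(\ell+1)/(2\kappa) + O(1/\kappa^2)$, so subtracting the two expansions yields the claimed bound, since $O(e^{-2\kappa}) \subset O(1/\kappa^2)$ as $\kappa \to \infty$.

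The main piece of bookkeeping is verifying the $j=1$ coefficient and confirming that the $\coth$ correction really contributes no $1/\kappa$ contamination; both are immediate from $\coth(\kappa) - 1 = 2 e^{-2\kappa}/(1 - e^{-2\kappa})$. Conceptually there is no serious obstacle---the exponential approximation was evidently designed so that its first two Taylor coefficients match those of $A_\ell$, and the proof is just making this matching precise.
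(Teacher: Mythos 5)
Your proposal is correct and follows essentially the same route as the paper's proof: both start from the closed form in Claim~\ref{claim:attenuation}, discard the exponentially small $\coth(\kappa)-1$ correction, identify the leading coefficients $1$ and $-\ell(\ell+1)/2$ of the expansion in $1/\kappa$ (the paper reads them off the $i=\ell$ and $i=\ell-1$ terms, you reindex the whole sum, which is the same computation), and compare with the first-order Taylor expansion of the exponential. Your version is just slightly more explicit in the bookkeeping; there is no substantive difference.
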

\begin{proof}
Using the fact that $\coth(\kappa) = 1 + O(e^{-2\kappa})$, and using  the $i=\ell$ and $i=\ell-1$ terms from the sum in Equation~\ref{eq:attenuationexact} of Claim~\ref{claim:attenuation}, we have that:
\begin{equation}
    A_\ell(\kappa) = 1 - \frac{\ell(\ell+1)}{2\kappa} + O\left(\frac{1}{\kappa^2}\right).
\end{equation}

Using a 1st order Taylor approximation for the exponential function yields:
\begin{equation}
\exp\left(-\frac{\ell(\ell+1)}{2\kappa}\right) = 1 - \frac{\ell(\ell+1)}{2\kappa} + O\left(\frac{1}{\kappa^2}\right),
\end{equation}
proving our claim.
\end{proof}

\begin{figure}
    \centering
    \includegraphics[width=\linewidth]{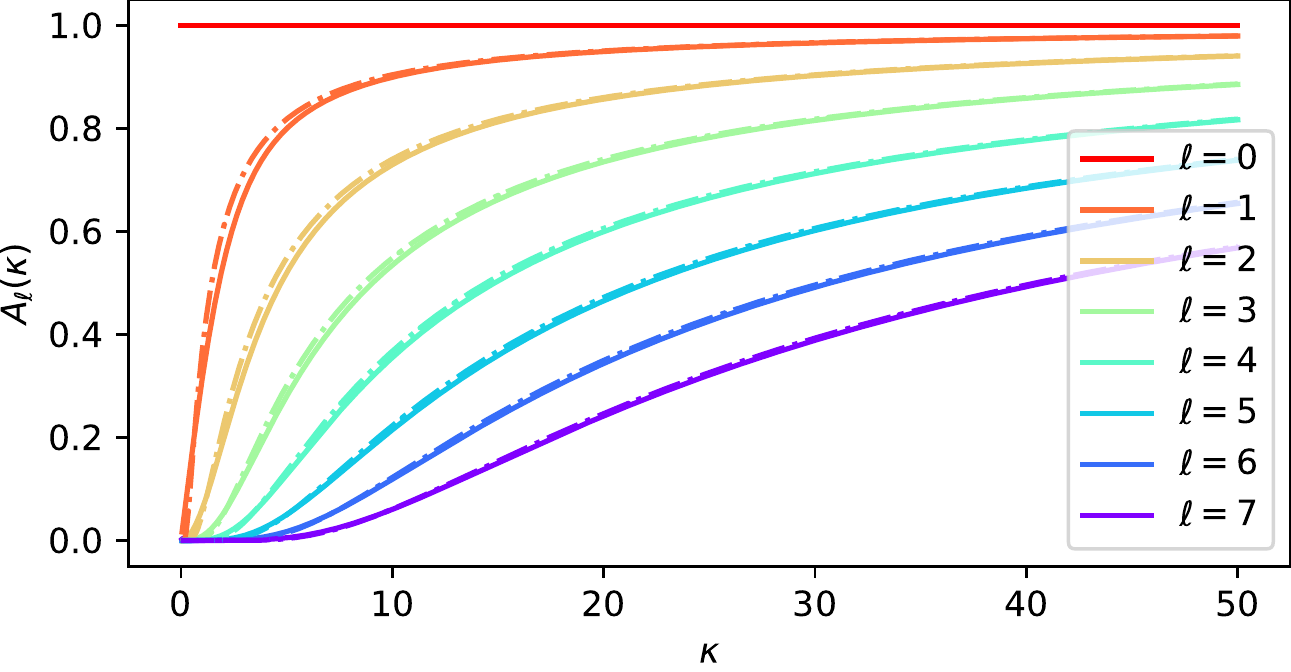}
    \caption{The first eight attenuation functions $A_\ell(\kappa)$. The exact expression (in solid lines) matches the approximation (in dashed lines), with an increasing accuracy as $\ell$ increases.}
    \label{fig:attenuation_approximation}
    \vspace{-0.2in}
\end{figure}

\section{Interpretations of Normal Vector Penalty}

As described in the main paper, the regularization term in Equation~11 penalizes ``backwards-facing'' normal vectors that contribute to the ray's final rendered color. 

Note that Equation~11 applies this penalty to the normals output by the spatial MLP as opposed to the normal vectors directly computed from the gradient of the density field. While this penalty is remarkably effective at improving recovered normal vectors, we find that directly applying it to the gradient density normals sometimes affects the optimization dynamics for fine geometric structures. Even when NeRF recovers a concentrated surface at the end of optimization, it typically passes through a fuzzier volumetric representation during optimization, and directly penalizing the gradient of the volume density can sometimes adversely affect this trajectory. Our approach of applying the orientation penalty to the predicted normals gives the spatial MLP two options at every location: it can either predict normals that agree with the density gradient ones, in which case our orientation loss is effectively applied to the density field; or it can predict normals that deviate from those computed from the density field, and incur a normal prediction penalty $\mathcal{R}_p$. We find that applying the penalty to predicted normals allows our method to adaptively apply the orientation loss, resulting in accurate normals without loss of fine details (see ablation studies). 

Besides providing smooth normals for computing reflection directions and providing a mechanism for an adaptive corner-preserving orientation loss, the predicted normals also allow the model to directly encode view directions by predicting a constant normal direction $\uvn'(\vx) = \uvc$, leading to a 1-to-1 mapping from view direction to reflection direction (see Equation~4). While this is discouraged by the normal prediction loss $\mathcal{R}_p$, our model can exploit this behavior in regions that are not well-described by a surface at a scale whose geometry the density field can capture.

\section{Optimization}

Our implementation is based on the official JAX~\cite{jax2018github} implementation of mip-NeRF~\cite{barron2021mipnerf}.

In all of our experiments on synthetic data we apply the normal orientation loss from Equation~11 and the normal prediction loss from Equation~10 of the main paper with weights of $0.1$ and $3\cdot10^{-4}$ respectively, relative to the standard NeRF data loss from Equation~2. In our experiments on real captured data, we apply a slightly higher weight of $10^{-3}$ on the normal prediction loss. We use the same weights during both the coarse and fine stages.

During training, we add i.i.d.\ Gaussian noise with standard deviation $0.1$ to the bottleneck $\vb$. We find that this slightly stabilizes our model's results in some cases by preventing it from using the bottleneck early in training. 

We optimize all experiments on the Blender scenes using single-image batching to be consistent with results reported in the NeRF and mip-NeRF papers. For our Shiny Blender dataset, we sample random batches of rays from all images. 

We train our model, ablations of our model, and all mip-NeRF baselines using a slightly modified version of mip-NeRF's learning schedule: $250\mathrm{k}$ optimization iterations with a batch size of $2^{14}$, using the Adam~\cite{adam} optimizer with hyperparameters $\beta_1 = 0.9$, $\beta_2= 0.999$, $\varepsilon=10^{-6}$, a learning rate that is annealed log-linearly from $2 \times 10^{-3}$ to $2 \times 10^{-5}$ with a warm-up phase of $512$ iterations, and gradient clipping to a norm of $10^{-3}$.

\section{Dataset Details}

Our new ``Shiny Blender" scenes were adapted from the the following BlendSwap models:
\begin{compactenum}
    \item Coffee: created by \emph{sleem}, CC-0 license (model \#10827).
    \item Toaster: created by \emph{PrinterKiller}, CC-BY license (model \#4989)
    \item Car: created by \emph{Xali}, CC-0 license (model \#24359).
    \item Helmet: created by \emph{kveidem}, CC-0 license (model \#21617).
\end{compactenum}

Our dataset of real scenes was captured by the authors. 

\section{Evaluation Details}

\paragraph{Normal Vectors}

To compute the normal vector for a ray, we sample normals along the ray $\{\uvn_i\}$ using Equation~3 from the main paper, and use the volume rendering procedure from Equation~1 to compute a single normal vector:
\begin{equation} \label{eq:accnormals}
    \vN(\vo, \uvd) = \sum_i w_i \uvn(\vx_i).
\end{equation}

We use Equation~\ref{eq:accnormals} to visualize normal maps, with gray-colored values corresponding to high variance normal vectors along a ray. For evaluating MAE, we use the normalized accumulated normals, $\uvN = \vN / \|\vN\|$.

\begin{table}[]
\centering
\resizebox{\linewidth}{!}{
\begin{tabular}{@{}l@{\,\,}|@{\,\,}ccc@{}}
& \textit{sedan} & \textit{toy car} & \textit{garden spheres} \\ \hline
Mip-NeRF, 8 layers     & 25.53 / \textbf{0.729} / \textbf{0.118} & 24.00 / 0.663 / 0.177 & 23.40 / \textbf{0.620} / \textbf{0.125} \\
Ours                & \textbf{25.65} / 0.720 / 0.119 & \textbf{24.25} / \textbf{0.674} / \textbf{0.168} & \textbf{23.46} / 0.601 / 0.138 
\end{tabular}
}
\vspace{-0.1in}
\caption{
Quantitative metrics (PSNR/SSIM/LPIPS) on the test sets of our real captured scenes.
}
\label{tab:real}
\vspace{-0.1in}
\end{table}
 
\paragraph{Baseline Implementations}

For comparisons to mip-NeRF~\cite{barron2021mipnerf}, we use the official open-source JAX implementation. For comparisons to PhySG~\cite{zhang2021physg}, we use the official PyTorch code open-sourced by the authors. However, the original hyperparameters produce poor results on our datasets, so the PhySG authors helped us tune their hyperparameters for the Blender datasets. VolSDF~\cite{yariv2021volume} does not currently have publicly-available code, but the authors graciously ran their code on the Blender dataset and provided us with rendered test-set images and normals. We report the NSVF~\cite{liu2020nsvf} and NeRF~\cite{mildenhall2020nerf} results on the Blender dataset from the tables in the mip-NeRF paper.

\section{Additional Results}

Table~\ref{tab:real} reports quantitative metrics on the test sets (every eighth image is held out for testing, as done in NeRF~\cite{mildenhall2020nerf}) of our three real captured scenes. Tables~\ref{tab:psnrs_allscenes},~\ref{tab:ssims_allscenes},~\ref{tab:lpips_allscenes}, and~\ref{tab:maes_allscenes} contain quantitative metrics on the original synthetic Blender dataset, and Tables~\ref{tab:shiny_psnrs_allscenes},~\ref{tab:shiny_ssims_allscenes},~\ref{tab:shiny_lpips_allscenes}, and~\ref{tab:shiny_maes_allscenes} contain quantitative metrics on our new synthetic Shiny Blender dataset.

\begin{table}[]
    \centering
    \resizebox{\linewidth}{!}{
    \begin{tabular}{@{}l|ccccccccr}
     & \!chair\! & \!lego\! & \!materials\! & \!mic\! & \!hotdog\! & \!ficus\! & \!drums\! & \!ship\!  \\ \hline
     PhySG~\cite{zhang2021physg}    &                    24.00 &                    20.19 &                    18.86 &                    22.33 &                    24.08 &                    19.02 &                    20.99 &                    15.35 \\
    VolSDF~\cite{yariv2021volume}   &                    30.57 &                    29.46 &                    29.13 &                    30.53 &                    35.11 &                    22.91 &                    20.43 &                    25.51 \\
    Mip-NeRF~\cite{barron2021mipnerf} & \cellcolor{yellow} 35.12 & \cellcolor{yellow} 35.92 & \cellcolor{yellow} 30.64 & \cellcolor{tablered}    36.76 & \cellcolor{yellow} 37.34 & \cellcolor{yellow} 33.19 & \cellcolor{yellow} 25.36 & \cellcolor{tablered}    30.52 \\ \hline
    Ours, PE & \cellcolor{tablered}    35.86 & \cellcolor{tablered}    36.33 & \cellcolor{orange} 35.22 & \cellcolor{yellow} 35.84 & \cellcolor{tablered}    38.18 & \cellcolor{orange} 33.60 & \cellcolor{tablered}    26.03 & \cellcolor{yellow} 30.17 \\
    Ours     & \cellcolor{orange} 35.83 & \cellcolor{orange} 36.25 & \cellcolor{tablered}    35.41 & \cellcolor{tablered}    36.76 & \cellcolor{orange} 37.72 & \cellcolor{tablered}    33.91 & \cellcolor{orange} 25.79 & \cellcolor{orange} 30.28 \\
    \end{tabular}
    }
    \vspace{-0.1in}
    \caption{
    Per-scene test set PSNRs on the Blender dataset~\cite{mildenhall2020nerf}.
    }
    \label{tab:psnrs_allscenes}

    \resizebox{\linewidth}{!}{
    \begin{tabular}{@{}l|ccccccccr}
     & \!chair\! & \!lego\! & \!materials\! & \!mic\! & \!hotdog\! & \!ficus\! & \!drums\! & \!ship\!  \\ \hline
     PhySG~\cite{zhang2021physg}    &   0.898 &                            0.821 &          0.838 &    0.933 &   0.912 &   0.873 &     0.884 &     0.727 \\
    VolSDF~\cite{yariv2021volume}   &  0.949 &                             0.951 &                                        0.954 & 0.969 & 0.972 &  0.929 &  0.893 &  0.842 \\
    Mip-NeRF~\cite{barron2021mipnerf} & \cellcolor{yellow} 0.981 &     \cellcolor{yellow}    0.980 & \cellcolor{yellow}    0.959 &  \cellcolor{tablered} 0.992 &  \cellcolor{yellow} 0.982 &  \cellcolor{yellow} 0.980 & \cellcolor{yellow} 0.933 &   \cellcolor{tablered}  0.885 \\ \hline
    Ours, PE & \cellcolor{tablered}             0.984 &               \cellcolor{tablered}     0.981 & \cellcolor{tablered} 0.983 & \cellcolor{yellow} 0.991 &  \cellcolor{tablered} 0.984 & \cellcolor{orange} 0.982 &    \cellcolor{tablered}                0.939 & \cellcolor{yellow} 0.878 \\
    Ours     &  \cellcolor{tablered}            0.984 &               \cellcolor{tablered}     0.981 & \cellcolor{tablered}   0.983 & \cellcolor{tablered} 0.992 & \cellcolor{tablered} 0.984 &  \cellcolor{tablered} 0.983 &  \cellcolor{orange}  0.937 & \cellcolor{orange}  0.880     \end{tabular}
    }
    \vspace{-0.1in}
    \caption{
    Per-scene test set SSIMs on the Blender dataset~\cite{mildenhall2020nerf}.
    }
    \label{tab:ssims_allscenes}

    \resizebox{\linewidth}{!}{
    \begin{tabular}{@{}l|ccccccccr}
     & \!chair\! & \!lego\! & \!materials\! & \!mic\! & \!hotdog\! & \!ficus\! & \!drums\! & \!ship\!  \\ \hline
     PhySG~\cite{zhang2021physg}    &      0.093 &      0.172 &      0.142 &      0.082 &      0.117 &      0.112 &      0.113 &      0.322 \\
    VolSDF~\cite{yariv2021volume}   &               0.056 &                 0.054 &                    0.048 &                     0.191 &                       0.043 &                    0.068 &                      0.119 &                    0.191 \\
    Mip-NeRF~\cite{barron2021mipnerf} & \cellcolor{yellow} 0.020 & \cellcolor{tablered}    0.018 & \cellcolor{yellow} 0.040 & \cellcolor{orange} 0.008 & \cellcolor{yellow} 0.026 & \cellcolor{yellow} 0.021 & \cellcolor{yellow} 0.064 & \cellcolor{tablered}    0.135 \\ \hline
    Ours, PE & \cellcolor{tablered}    0.017 & \cellcolor{tablered}    0.018 & \cellcolor{orange} 0.023 & \cellcolor{orange} 0.008 & \cellcolor{tablered}    0.022 & \cellcolor{orange} 0.020 & \cellcolor{tablered}    0.059 & \cellcolor{yellow} 0.143 \\
    Ours     & \cellcolor{tablered}    0.017 & \cellcolor{tablered}    0.018 & \cellcolor{tablered}    0.022 & \cellcolor{tablered}    0.007 & \cellcolor{tablered}    0.022 & \cellcolor{tablered}    0.019 & \cellcolor{tablered}    0.059 & \cellcolor{orange} 0.139 \\
    \end{tabular}
    }
    \vspace{-0.1in}
    \caption{
    Per-scene test set LPIPS on the Blender dataset~\cite{mildenhall2020nerf}.
    }
    \label{tab:lpips_allscenes}

    \resizebox{\linewidth}{!}{
    \begin{tabular}{@{}l|ccccccccr}
     & \!chair\! & \!lego\! & \!materials\! & \!mic\! & \!hotdog\! & \!ficus\! & \!drums\! & \!ship\!  \\ \hline
     PhySG~\cite{zhang2021physg} &     \cellcolor{orange} 18.569 &      40.244 &      18.986 &      26.053 &      28.572 &      \cellcolor{tablered} 35.974 &    \cellcolor{orange}  21.696 &      43.265 \\
    VolSDF~\cite{yariv2021volume}   & \cellcolor{tablered}    14.085 & \cellcolor{yellow} 26.619 & \cellcolor{tablered}    8.277 & \cellcolor{tablered}    19.579 & \cellcolor{tablered}    12.170 & \cellcolor{orange}    39.801 & \cellcolor{tablered}    21.458 & \cellcolor{tablered}    16.974 \\
    Mip-NeRF~\cite{barron2021mipnerf} &                    28.044 &                    30.532 &                    64.074 &                    36.489 &                    29.303 &                    53.524 &                    32.374 &                    37.667 \\ \hline
    Ours, PE &  20.018 & \cellcolor{orange} 26.471 & \cellcolor{yellow} 10.162 & \cellcolor{yellow} 25.921 & \cellcolor{yellow} 13.920 &  41.557 & \cellcolor{yellow} 27.766 & \cellcolor{yellow} 34.212 \\
    Ours     & \cellcolor{yellow} 19.852 & \cellcolor{tablered}    24.469 & \cellcolor{orange} 9.531 & \cellcolor{orange} 24.938 & \cellcolor{orange} 13.211 & \cellcolor{yellow} 41.052 &  27.853 & \cellcolor{orange} 31.707 \\
    \end{tabular}
    }
    \vspace{-0.1in}
    \caption{
    Per-scene test set normal MAEs on the Blender dataset~\cite{mildenhall2020nerf}.
    }
    \label{tab:maes_allscenes}
    \vspace{-0.2in}
\end{table}

\begin{table}[]
    \centering
    \resizebox{\linewidth}{!}{
    \begin{tabular}{@{}l|ccccccr}
     & \!teapot\! & \!toaster\! & \!car\! & \!ball\! & \!coffee\! & \!helmet\!  \\ \hline
     PhySG~\cite{zhang2021physg} &      35.83 &      18.59 &      24.40 &      27.24 &      23.71 &      27.51 \\ \hline
     Mip-NeRF~\cite{barron2021mipnerf}                       &                    46.00 &                    22.37 &                    26.50 &                    25.94 &                    30.36 &                    27.39 \\
    Mip-NeRF, 8 layers             & \cellcolor{orange} 47.35 &                    25.51 &                    27.99 &                    27.53 &                    32.14 &                    29.04 \\
    Mip-NeRF, 8 layers, w/ normals &                    47.09 &                    25.14 &                    27.97 &                    26.79 &                    32.12 &                    29.21 \\
    Mip-NeRF, 8 layers,  with $\mathcal{R}_o$ & \cellcolor{orange} 47.35 &                    25.32 &                    27.91 &                    26.89 &                    32.21 &                    29.22 \\ \hline
    Ours, no reflection             &                    44.74 &                    24.04 &                    27.41 &                    20.94 &                    31.95 &                    27.76 \\
    Ours, no $\mathcal{R}_o$          &                    46.80 & \cellcolor{orange} 25.78 &                    28.43 &                    27.06 &                    32.58 &                    29.06 \\
    Ours, no pred. normals         &                     47.09 &                    23.32 &                    27.19 &                    26.09 &                    31.79 & \cellcolor{tablered}    30.54 \\
    Ours, concat. viewdir          &                    46.01 &                    25.38 & \cellcolor{orange} 30.71 &                    47.45 &                    34.19 &                    28.81 \\
    Ours, fixed lobe               &                    46.82 &                    25.57 &                    30.09 &                    47.25 & \cellcolor{orange} 34.37 &                    29.00 \\ \hline
    Ours, no diffuse color         &                    46.45 &                    25.56 &                    30.46 &                    34.53 &                    34.05 &                    28.87 \\
    Ours, no tint                  &                    46.54 &                    25.49 &                    30.14 & \cellcolor{orange} 47.53 & \cellcolor{yellow} 34.24 &                    28.78 \\
    Ours, no roughness             &                    45.28 &                    25.39 &                    30.44 &                    36.33 &                    33.19 & \cellcolor{orange} 29.72 \\
    Ours, PE                       &                    46.55 & \cellcolor{tablered}    26.74 & \cellcolor{yellow} 30.53 & \cellcolor{tablered}    47.56 & \cellcolor{tablered}    34.45 &                    29.59 \\
    Ours                           & \cellcolor{tablered}    47.90 & \cellcolor{yellow} 25.70 & \cellcolor{tablered}    30.82 & \cellcolor{yellow} 47.46 &                    34.21 & \cellcolor{yellow} 29.68 \\
    \end{tabular}
    }
    \vspace{-0.1in}
    \caption{
    Per-scene test set PSNRs on our Shiny Blender dataset.
    }
    \label{tab:shiny_psnrs_allscenes}

    \resizebox{\linewidth}{!}{
    \begin{tabular}{@{}l|ccccccr}
     & \!teapot\! & \!toaster\! & \!car\! & \!ball\! & \!coffee\! & \!helmet\!  \\ \hline
     PhySG~\cite{zhang2021physg} &      0.990 &      0.805 &      0.910 &      0.947 &      0.922 &      0.953 \\ \hline
     Mip-NeRF~\cite{barron2021mipnerf}                       & \cellcolor{orange} 0.997 &                    0.891 &                    0.922 &                    0.935 &                    0.966 &                    0.939 \\
Mip-NeRF, 8 layers             & \cellcolor{orange} 0.997 & \cellcolor{yellow} 0.925 &                    0.936 &                    0.949 &                    0.971 &                    0.957 \\
Mip-NeRF, 8 layers, w/ normals & \cellcolor{orange} 0.997 &                    0.923 &                    0.936 &                    0.946 &                    0.970 &                    0.957 \\
Mip-NeRF, 8 layers,  with $\mathcal{R}_o$ & \cellcolor{orange} 0.997 &                    0.924 &                    0.935 &                    0.947 &                    0.971 &                      0.957 \\ \hline
Ours, no reflection            &                    0.996 &                    0.912 &                    0.930 &                    0.905 &                    0.970 &                    0.950 \\
Ours, no $\mathcal{R}_o$       & \cellcolor{orange} 0.997 & \cellcolor{orange} 0.926 &                    0.937 &                    0.939 &                    0.971 &                    0.956 \\
Ours, no pred. normals         & \cellcolor{orange} 0.997 &                    0.898 &                    0.926 &                    0.865 &                    0.967 & \cellcolor{tablered}    0.962 \\
Ours, concat. viewdir          & \cellcolor{orange} 0.997 &                    0.919 & \cellcolor{tablered}    0.956 & \cellcolor{orange} 0.995 & \cellcolor{orange} 0.974 &                    0.952 \\
Ours, fixed lobe               & \cellcolor{orange} 0.997 &                    0.920 &                    0.952 & \cellcolor{orange} 0.995 & \cellcolor{orange} 0.974 &                    0.954 \\ \hline
Ours, no diffuse color         & \cellcolor{orange} 0.997 &                    0.920 &                    0.953 &                    0.977 &                    0.973 &                    0.954 \\
Ours, no tint                  & \cellcolor{orange} 0.997 &                    0.921 &                    0.951 & \cellcolor{orange} 0.995 & \cellcolor{orange} 0.974 &                    0.954 \\
Ours, no roughness             &                    0.996 &                    0.917 & \cellcolor{yellow} 0.954 &                    0.983 &                    0.972 & \cellcolor{orange} 0.958 \\
Ours, PE                       & \cellcolor{orange} 0.997 & \cellcolor{tablered}    0.928 & \cellcolor{yellow} 0.954 & \cellcolor{tablered}    0.996 & \cellcolor{tablered}    0.975 & \cellcolor{orange} 0.958 \\
Ours                           & \cellcolor{tablered} 0.998 &                    0.922 & \cellcolor{orange} 0.955 & \cellcolor{orange} 0.995 & \cellcolor{orange} 0.974 & \cellcolor{orange} 0.958 \\
    \end{tabular}
    }
    \vspace{-0.1in}
    \caption{
    Per-scene test set SSIMs on our Shiny Blender dataset.
    }
    \label{tab:shiny_ssims_allscenes}

    \resizebox{\linewidth}{!}{
    \begin{tabular}{@{}l|ccccccr}
     & \!teapot\! & \!toaster\! & \!car\! & \!ball\! & \!coffee\! & \!helmet\!  \\ \hline
     PhySG~\cite{zhang2021physg} &      0.022 &      0.194 &      0.091 &      0.179 &      0.150 &      0.089 \\ \hline
     Mip-NeRF~\cite{barron2021mipnerf}                       &                    0.008 &                    0.123 &                    0.059 &                    0.168 &                    0.086 &                    0.108 \\
Mip-NeRF, 8 layers             & \cellcolor{orange} 0.006 & \cellcolor{tablered}    0.080 &                    0.052 &                    0.139 &                    0.082 & \cellcolor{orange} 0.072 \\
Mip-NeRF, 8 layers, w/ normals & \cellcolor{orange} 0.006 & \cellcolor{yellow} 0.085 &                    0.052 &                    0.144 &                    0.082 & \cellcolor{orange} 0.072 \\
Mip-NeRF, 8 layers,  with $\mathcal{R}_o$ & \cellcolor{orange} 0.006 & \cellcolor{orange} 0.082 &                    0.052 &                    0.143 &                    0.082 & \cellcolor{orange} 0.072 \\ \hline
Ours, no reflection             &                   0.007 &                    0.091 &                    0.052 &                    0.192 &                    0.082 &                    0.080 \\
Ours, no $\mathcal{R}_o$          &                    0.008 &                    0.086 &                    0.051 &                    0.161 &                    0.082 &                    0.076 \\
Ours, no pred. normals         & \cellcolor{orange} 0.006 &                    0.134 &                    0.064 &                    0.272 &                    0.087 & \cellcolor{tablered}    0.068 \\
Ours, concat. viewdir          & \cellcolor{orange} 0.006 &                    0.095 & \cellcolor{tablered}    0.040 &                   0.061 & \cellcolor{yellow} 0.079 &                    0.087 \\
Ours, fixed lobe               &                    0.009 &                    0.096 &                    0.043 &                   0.061 &                    0.080 &                    0.080 \\ \hline
Ours, no diffuse color         &                    0.009 &                    0.096 &                    0.043 &                    0.095 & \cellcolor{yellow} 0.079 &                    0.080 \\
Ours, no tint                  &                    0.008 &                    0.094 &                    0.043 & \cellcolor{orange} 0.060 & \cellcolor{yellow} 0.079 &                    0.078 \\
Ours, no roughness             &                    0.009 &                    0.099 &                    0.042 &                    0.086 &                    0.081 &                        0.074 \\
Ours, PE                       &                    0.007 &                    0.092 & \cellcolor{orange} 0.041 & \cellcolor{orange} 0.060 & \cellcolor{tablered}    0.077 &                       0.074 \\
Ours                           & \cellcolor{tablered}    0.004 &                    0.095 & \cellcolor{orange} 0.041 & \cellcolor{tablered}    0.059 & \cellcolor{orange} 0.078 &                    0.075 \\
    \end{tabular}
    }
    \vspace{-0.1in}
    \caption{
    Per-scene test set LPIPS on our Shiny Blender dataset.
    }
    \label{tab:shiny_lpips_allscenes}

    \resizebox{\linewidth}{!}{
    \begin{tabular}{@{}l|ccccccr}
     & \!teapot\! & \!toaster\! & \!car\! & \!ball\! & \!coffee\! & \!helmet\!  \\ \hline
PhySG~\cite{zhang2021physg}               & \cellcolor{tablered} 6.634  & \cellcolor{tablered} 9.749 & \cellcolor{tablered}  8.844 & \cellcolor{tablered}  0.700 &                22.514 & \cellcolor{tablered} 2.324 \\ \hline
Mip-NeRF~\cite{barron2021mipnerf}         &                      66.470 &                    42.787 &                    40.954 &                    104.765 &                    29.427 &                    77.904 \\
Mip-NeRF, 8 layers                        &                      68.238 &                    35.220 &                    23.670 &                    127.863 &                    25.465 &                    67.966 \\
Mip-NeRF, 8 layers, w/ normals            &                      67.999 &                    34.093 &                    23.548 &                    130.755 &                    26.527 &                    66.701 \\
Mip-NeRF, 8 layers,  with $\mathcal{R}_o$ &                      68.238 &                    35.837 &                    23.985 &                    127.683 &                    24.101 &                    64.372 \\ \hline
Ours, no reflection                       & \cellcolor{yellow}   19.263 & \cellcolor{orange} 19.325 & \cellcolor{yellow} 13.643 &                    15.142 & \cellcolor{orange} 9.730 & \cellcolor{yellow} 20.038 \\
Ours, no $\mathcal{R}_o$                  &                      43.116 &                    43.113 &                    37.134 &                    106.003 &                    29.301 &                    56.710 \\
Ours, no pred. normals                    &                      67.999 & \cellcolor{yellow} 24.886 &                    21.644 &                    48.061 &                    10.848 & \cellcolor{orange}    10.573 \\
Ours, concat. viewdir                     &                      20.359 &                    40.587 & \cellcolor{orange}    12.877 &                  1.577 & \cellcolor{tablered}    9.489 &                    42.615 \\
Ours, fixed lobe                          &                      35.791 &                    42.183 &                    18.410 & \cellcolor{orange}  1.536 &                    24.045 &                    36.785 \\ \hline
Ours, no diffuse color                    &                      38.347 &                    42.705 &                    16.802 &                    5.423 &                    15.363 &                    38.119 \\
Ours, no tint                             &                      29.537 &                    43.687 &                    16.854 &                    1.556 &                    11.233 &                    33.327 \\
Ours, no roughness                        &                      33.821 &                    44.666 &                    17.440 &                    3.557 &                    26.578 &                    29.723 \\
Ours, PE                                  &                      23.123 &                    41.415 &                    16.214 &                    1.969 & \cellcolor{yellow} 9.749 &                     29.409 \\
Ours                                      & \cellcolor{orange}   9.234 &                     42.870 &                  14.927 & \cellcolor{yellow}   1.548 &                    12.240 &                    29.484 \\
    \end{tabular}
    }
    \vspace{-0.1in}
    \caption{
    Per-scene test set MAEs on our Shiny Blender dataset.
    }
    \label{tab:shiny_maes_allscenes}
\end{table}

\end{document}